\newtheorem{definition}{Definition}
\newtheorem{thm}{Theorem}
\newtheorem{prop}{Proposition}
\begin{document}
%
% paper title
% Titles are generally capitalized except for words such as a, an, and, as,
% at, but, by, for, in, nor, of, on, or, the, to and up, which are usually
% not capitalized unless they are the first or last word of the title.
% Linebreaks \\ can be used within to get better formatting as desired.
% Do not put math or special symbols in the title.
\title{Isolation Distributional Kernel: \\A New Tool for Point \& Group Anomaly Detection}
%
%
% author names and IEEE memberships
% note positions of commas and nonbreaking spaces ( ~ ) LaTeX will not break
% a structure at a ~ so this keeps an author's name from being broken across
% two lines.
% use \thanks{} to gain access to the first footnote area
% a separate \thanks must be used for each paragraph as LaTeX2e's \thanks
% was not built to handle multiple paragraphs
%
%
%\IEEEcompsocitemizethanks is a special \thanks that produces the bulleted
% lists the Computer Society journals use for "first footnote" author
% affiliations. Use \IEEEcompsocthanksitem which works much like \item
% for each affiliation group. When not in compsoc mode,
% \IEEEcompsocitemizethanks becomes like \thanks and
% \IEEEcompsocthanksitem becomes a line break with idention. This
% facilitates dual compilation, although admittedly the differences in the
% desired content of \author between the different types of papers makes a
% one-size-fits-all approach a daunting prospect. For instance, compsoc 
% journal papers have the author affiliations above the "Manuscript
% received ..."  text while in non-compsoc journals this is reversed. Sigh.

\author{Kai Ming Ting,
        Bi-Cun Xu,
        Takashi Washio,
        and~Zhi-Hua Zhou,~\IEEEmembership{Fellow,~IEEE}% <-this % stops a space
\IEEEcompsocitemizethanks{\IEEEcompsocthanksitem K. M. Ting, B.-C. Xu and Z.-H. Zhou are with National Key Laboratory ofNovel Software Technology, Nanjing University, 210023, China.\protect\\
% note need leading \protect in front of \\ to get a newline within \thanks as
% \\ is fragile and will error, could use \hfil\break instead.
E-mail: \{tingkm,xubc,zhouzh\}@lamda.nju.edu.cn
\IEEEcompsocthanksitem T Washio is with the Institute of Scientific and Industrial Research, Osaka University, Japan

}% <-this % stops an unwanted space
%\thanks{Manuscript received April 19, 2005; revised August 26, 2015.}
}

\IEEEtitleabstractindextext{%
\begin{abstract}
We introduce Isolation Distributional Kernel as a new way to measure the similarity between two distributions. Existing approaches based on kernel mean embedding, which convert a point kernel to a distributional kernel, have two key issues: the point kernel employed has a feature map with intractable dimensionality; and it is {\em data independent}.
This paper shows that Isolation Distributional Kernel (IDK), which is based on a {\em data dependent} point kernel, addresses both key issues.
We demonstrate IDK's efficacy and efficiency as a new tool for kernel based anomaly detection for both point and group anomalies. Without explicit learning, using IDK alone outperforms existing kernel based point anomaly detector OCSVM and other kernel mean embedding methods that rely on Gaussian kernel.
For group anomaly detection, %existing kernel based methods such as OCSMM cannot deal with large scale datasets because of its high time complexity. 
we introduce an IDK based detector called IDK$^2$. 
It reformulates the problem of group anomaly detection in input space into the problem of point anomaly detection in Hilbert space, without the need for learning. 
IDK$^2$ runs orders of magnitude faster than group anomaly detector OCSMM.
%because it avoids computing all pairs of groups which is the root cause of the high computational cost of existing methods of distributional kernel.
We reveal for the first time that an effective kernel based anomaly detector based on kernel mean embedding must employ a characteristic kernel which is data dependent.

\end{abstract}

% Note that keywords are not normally used for peerreview papers.
\begin{IEEEkeywords}
Distributional Kernel, Kernel Mean Embedding, Anomaly  Detection.
\end{IEEEkeywords}}

% make the title area
\maketitle

% To allow for easy dual compilation without having to reenter the
% abstract/keywords data, the \IEEEtitleabstractindextext text will
% not be used in maketitle, but will appear (i.e., to be "transported")
% here as \IEEEdisplaynontitleabstractindextext when the compsoc 
% or transmag modes are not selected <OR> if conference mode is selected 
% - because all conference papers position the abstract like regular
% papers do.
\IEEEdisplaynontitleabstractindextext
% \IEEEdisplaynontitleabstractindextext has no effect when using
% compsoc or transmag under a non-conference mode.

% For peer review papers, you can put extra information on the cover
% page as needed:
% \ifCLASSOPTIONpeerreview
% \begin{center} \bfseries EDICS Category: 3-BBND \end{center}
% \fi
%
% For peerreview papers, this IEEEtran command inserts a page break and
% creates the second title. It will be ignored for other modes.
\IEEEpeerreviewmaketitle

\IEEEraisesectionheading{\section{Introduction}\label{sec:introduction}}

\IEEEPARstart{I}{n} many real-world applications, objects are naturally represented as groups of data points generated from one or more distributions \cite{EMK_Bo-NIPS2009,SupportMeasureMchines-NIPS2012,Sutherland-Thesis2016}. Examples are: (i) in the context of multi-instance learning, each object is represented as a bag of data points; and (ii) in astronomy, a cluster of galaxies is a sample from a distribution, where other clusters of galaxies may also belong.

Kernel mean embedding \cite{HilbertSpaceEmbedding2007, KernelMeanEmbedding2017} on distributions is one effective way to build a distributional kernel from a point kernel, enabling similarity between distributions to be measured. The current approach has focused on point kernels which have {\em a feature map with intractable dimensionality}. This feature map is a known key issue in kernel mean embedding in the literature \cite{KernelMeanEmbedding2017}; and it has led to $O(n^2)$ time complexity, where $n$ is the input data size. 

Here, we identify that being {\em data independent} point kernel  is another key issue which compromises the effectiveness of the similarity measurement that impacts on task-specific performance. 
We propose to employ a {\em data dependent} point kernel to address the above two key issues directly. As it is implemented using Isolation Kernel \cite{ting2018IsolationKernel,IsolationKernel-AAAI2019}, we called the proposed kernel mean embedding: Isolation Distributional Kernel or IDK.

Our contributions are:
\begin{enumerate}
\setlength\itemsep{0em}
    \item Proposing a new implementation of Isolation Kernel which  has  the  required  data  dependent  property for  anomaly  detection. We provide a geometrical interpretation of the new Isolation Kernel in Hilbert Space that explains its superior anomaly detection accuracy over existing Isolation Kernel.

    \item Formally proving that the new Isolation Kernel (i) has the following data dependent property: \emph{two points, as measured by Isolation Kernel derived in a sparse region, are more similar than the same two points, as measured by Isolation Kernel derived in a dense region}; and (ii) is a characteristic kernel. We show that both properties are necessary conditions to be an effective kernel-based anomaly detector based on kernel mean embedding.
    
    \item Introducing Isolation Distributional Kernel (IDK). It is distinguished from existing distributional kernels in two aspects. First, the use of the data dependent Isolation Kernel produces high accuracy in anomaly detection tasks. Second, the Isolation Kernel's exact and finite-dimensional feature map enables IDK to have $O(n)$ time complexity.
    
    \item Proposing IDK point anomaly detector and IDK$^2$ group anomaly detector. IDK and IDK$^2$ are more effective and more efficient than existing Gaussian kernel based anomaly detectors. They run orders of magnitude faster and can deal with large scale datasets that their Gaussian kernel counterparts could not. Remarkably, the superior detection accuracy is achieved without explicit learning.
    
%    \item Demonstrating that, without explicit learning, IDK anomaly detector not only significantly outperforms Gaussian kernel based anomaly detectors in detecting point anomalies, but it runs orders of magnitude faster. 
    
%    \item Proposing an efficient and effective new group anomaly detector IDK$^2$ which can deal with large scale datasets.

     \item  Revealing for the first time that {\em the problem of group anomaly detection in input space} can be effectively reformulated as {\em the problem of point anomaly detection in Hilbert space}, without explicit learning.
     
\end{enumerate}

\section{Background}
\label{sec_background}
In this section, we briefly describe the kernel mean embedding \cite{KernelMeanEmbedding2017} and Isolation Kernel \cite{ting2018IsolationKernel} as the background of the proposed method. The key symbols and notations used are shown in Table \ref{tbl_symbols}.

\subsection{Kernel mean embedding and two key issues}
\label{sec_kernel_mean_embedding}

\begin{table}[t]
	%[!htbp]
		\centering
		\caption{Key symbols and notations.}
		\label{tbl_symbols}
		\begin{tabular}{ll}
			\toprule
			
			$x$ & a point in input space $\mathbb{R}^d$\\
			$\mathsf{G}$ & a set of points $\{x_i\ |\ i=1,\dots,m\}$ in $\mathbb{R}^d$,  $x \sim \mathcal{P}_\mathsf{G}$\\
		$\mathcal{P}_\mathsf{G}$ & a distribution that generates a set $\mathsf{G}$ of points in $\mathbb{R}^d$\\	
		
		$\widehat{\mathcal{K}}_G$& Gaussian Distributional Kernel (GDK)\\
		$\widehat{\varphi}$ & Approximate/exact feature map of GDK\\
		$\widehat{\mathcal{K}}_{NG}$& Nystr\"{o}m accelerated GDK using its feature map $\widehat{\varphi}$\\
		$\widehat{\mathcal{K}}_I$ & Isolation Distributional Kernel (IDK)\\
		$\widehat{\Phi}$ & Exact feature map of IDK\\
		$\mathbf{g}$ & $\mathsf{G}$ is mapped to a point $\mathbf{g}=\widehat{\Phi}(\mathcal{P}_\mathsf{G})$ in Hilbert Space  $\mathscr{H}$\\
		$\Pi$ &  a set of points $\{\mathbf{g}_j\ |\ j=1,\dots,n\}$ in $\mathscr{H}$, $\mathbf{g} \sim \mathbf{P}_\Pi$\\
		$\mathbf{P}_\Pi$ & 	 a distribution that generates a set $\Pi$ of points in $\mathscr{H}$\\
%		$\widehat{\mathcal{K}}$& Gaussian Distributional Kernel (GDK)\\
		GDK$^2$ & Two levels of GDK with Nystr\"{o}m approximated  $\widehat{\varphi}$ \& $\widehat{\varphi}_2$\\
		IDK$^2$ & Two levels of IDK with $\widehat{\Phi}$ \& $\widehat{\Phi}_2$\\
			\bottomrule
		\end{tabular}
	\end{table}

Let $S$ and $T$ be two nonempty datasets where each point $x$ in $S$ and $T$ belongs to a subspace $\mathcal{X} \subseteq \mathbb{R}^d$ and is drawn from probability distributions $\mathcal{P}_S$ and $\mathcal{P}_T$ defined on $\mathbb{R}^d$, respectively. $\mathcal{P}_S$ and $\mathcal{P}_T$ are strictly positive on $\mathcal{X}$ and strictly zero on $\overline{\mathcal{X}}=\mathbb{R}^d \setminus \mathcal{X}$, {\it i.e.}, $\forall X \subseteq \mathcal{X} \mbox{ s.t. } X \neq \emptyset; \mathcal{P}_S(X), \mathcal{P}_T(X)>0$, and $\forall X \subseteq \overline{\mathcal{X}} \mbox{ s.t. } X \neq \emptyset; \mathcal{P}_S(X), \mathcal{P}_T(X)=0$. We denote the density of $\mathcal{P}_S$ and $\mathcal{P}_T$ as $\mathcal{P}_S(x)$ and $\mathcal{P}_T(x)$, respectively.

Using kernel mean embedding \cite{HilbertSpaceEmbedding2007,KernelMeanEmbedding2017},  the empirical estimation of the distributional kernel $\widehat{\mathcal{K}}$ on $\mathcal{P}_S$ and $\mathcal{P}_T$, which is based on a point kernel $\kappa$ on points $x,y \in \mathcal{X}$, is given as:
  \begin{equation}
	\widehat{\mathcal{K}}_G(\mathcal{P}_S,\mathcal{P}_T) =\frac{1}{|S||T|}\sum_{x \in S} \sum_{y \in T} \kappa(x,y). 
	\label{eqn_mmk}
  \end{equation}

\subsubsection*{First issue: Feature map has intractable dimensionality}
The distributional kernel $\widehat{\mathcal{K}}_G$ relies on a point kernel $\kappa$, e.g., Gaussian kernel, which has a feature map with intractable dimensionality. 
The use of a  point kernel which has {\em a feature map with intractable dimensionality is regarded as a fundamental issue of kernel mean embedding} \cite{KernelMeanEmbedding2017}. This can be seen from Eq~\ref{eqn_mmk} which has time complexity $O(n^2)$ if each set of $S$ and $T$ has data size $n$.

It has been recognised that the time complexity can be reduced by utilising the feature map of the point kernel \cite{KernelMeanEmbedding2017}. 

If the chosen point kernel can be approximated as $\kappa(x,y) \approx \left<\varphi(x), \varphi(y)\right>$, where $\varphi$ is a finite-dimensional feature map approximating the feature map of $\kappa$. Then, $\widehat{\mathcal{K}}$ can be written as 	
  \begin{equation}
\widehat{\mathcal{K}}_{NG}(\mathcal{P}_S,\mathcal{P}_T)  \approx  \frac{1}{|S||T|}\sum_{x\in S} \sum_{y\in T} \varphi(x)^\top\varphi(y)  
  =  \left< \widehat{\varphi}(\mathcal{P}_S), \widehat{\varphi}(\mathcal{P}_T) \right>
   \label{eqn_MMK_approximation}
  \end{equation}
where $\widehat{\varphi}(\mathcal{P}_T) = \frac{1}{|T|} \sum_{x \in T} \varphi(x)$ is the empirical estimation of the approximate feature map of $\widehat{\mathcal{K}}_G(\mathcal{P}_T, \cdot)$, or equivalently, the approximate kernel mean map of $\mathcal{P}_T$ in RKHS (Reproducing Kernel Hilbert Space)  $\mathscr{H}$ associated with $\widehat{\mathcal{K}}_G$.

Note that the approximation $\kappa(x,y) \approx \left<\varphi(x), \varphi(y)\right>$ is essential in order to have a finite-dimensional feature map. 
\textbf{This enables the use of Eq~\ref{eqn_MMK_approximation} to reduce the time complexity of computing $\widehat{\mathcal{K}}(\mathcal{P}_S,\mathcal{P}_T)$ to $O(n)$} since $\widehat{\varphi}(\mathcal{P})$ can be computed independently in $O(n)$. Otherwise, Eq \ref{eqn_mmk} must be used which costs $O(n^2)$.
A successful approach of finite-dimensional feature map approximation is kernel functional approximation. Representative methods are Nystr\"{o}m method \cite{Nystrom_NIPS2000} and Random Fourier Features \cite{RandomFeatures2007,Nystrom-NIPS12}. 

Existing distributional kernels such as the level-2 kernel\footnote{The level-2 kernel in OCSMM \cite{SupportMeasureMchines-NIPS2012} refers to the distributional kernel created from a point kernel as the level-1 kernel. This is different from our use of the term: both level-1 and level-2 IDKs are distributional kernels.} used in support measure machine (SMM) \cite{SupportMeasureMchines-NIPS2012}, Mean map kernel (MMK) \cite{Sutherland-Thesis2016} and Efficient Match Kernel (EMK) \cite{EMK_Bo-NIPS2009} have exactly the same form as shown in Eq \ref{eqn_mmk}, where $\kappa$ can be any of the existing data independent point kernels. Both MMK and EMK employ a kernel functional approximation in order to use Eq \ref{eqn_MMK_approximation}.

In summary, using a point kernel which has a feature map with intractable dimensionality,
the kernel functional approximation is an enabling step to approximate the point kernel with a finite-dimensional feature map. 
Otherwise, the mapping from $T$ in input space to a point $\widehat{\varphi}(\mathcal{P}_T)$ in Hilbert space cannot be performed; and Eq \ref{eqn_MMK_approximation} cannot be computed. However, these methods of kernel functional approximation are computationally expensive; and they almost always weaken the final outcome in comparison with that derived from Eq \ref{eqn_mmk}.

\subsubsection*{Second issue: kernel is data independent}

In addition to the known key issue mentioned above, we identify that {\em a data independent point kernel is a key issue} which 
leads to poor task-specific performance. 

The weakness of using a data independent kernel/distance is well recognised in the literature. For example, distance metric learning \cite{DistMetricLearning-Xing:2002,weinberger2009distance,zadeh2016geometric} aims to transform the input space such that points of the same class become closer and points of different classes are lengthened in the transformed space than those in the input space. Distance metric learning has been shown to improve the classification accuracy of k nearest neighbour classifiers \cite{DistMetricLearning-Xing:2002,weinberger2009distance,zadeh2016geometric}. 

A recent work has shown that data independent kernels such as Laplacian kernel and Gaussian Kernel are the source of weaker predictive SVM classifiers  \cite{ting2018IsolationKernel}. Unlike distance metric learning, it creates a data dependent kernel directly from data, requiring neither class information nor explicit learning. It also provides a reason why a data dependent kernel is able to improve the predictive accuracy of SVM that uses a data independent kernel.

Here we show that the use of data independent kernel reduces the effectiveness of kernel mean embedding in the context of anomaly detection. The resultant anomaly detectors which employ Gaussian kernel, using either $\widehat{\mathcal{K}}_G$ or $\widehat{\mathcal{K}}_{NG}$, perform poorly (see Section \ref{sec_experiments}.) This is because a data independent kernel is employed.

%In a nutshell, \textbf{the source of the two key issues is: the point kernel employed has a feature map with intractable dimensionality and is data independent}.  We address both issues from its source by using a recently introduced Isolation Kernel \cite{ting2018IsolationKernel,IsolationKernel-AAAI2019}.

\subsection{Isolation Kernel}
\label{sec_IsolationKernel}
Let $D \subset \mathcal{X} \subseteq \mathbb{R}^d$ be a dataset sampled from an unknown $\mathcal{P}_D$; and $\mathbb{H}_\psi(D)$ denote the set of all partitionings $H$ that are admissible from $\mathcal{D} \subset D$, where each point $z \in \mathcal{D}$ has the equal probability of being selected from $D$; and $|\mathcal{D}|=\psi$.
Each partition $\theta[z] \in H$ isolates a point $z \in \mathcal{D}$ from the rest of the points in $\mathcal{D}$.
Let $\mathds{1}(\cdot)$ be an indicator function.

\begin{definition}\label{IKernel} \cite{ting2018IsolationKernel,IsolationKernel-AAAI2019} For any two points $x,y \in \mathbb{R}^d$,
	Isolation Kernel of $x$ and $y$ is defined to be
	the expectation taken over the probability distribution on all partitionings $H \in \mathds{H}_\psi(D)$ that both $x$ and $y$  fall into the same isolating partition $\theta[z] \in H$, where $z \in \mathcal{D} \subset D$, $\psi=|\mathcal{D}|$:
	\begin{eqnarray}
\kappa_I(x,y\ |\ D)  & = & {\mathbb E}_{\mathds{H}_\psi(D)} [\mathds{1}(x,y \in \theta[z]\ | \ \theta[z] \in H)] 
%\nonumber\\
% & = & {\mathbb E}_{\mathcal{D} \subset D} [\mathds{1}(x,y\in \theta[z]\ | \ z\in \mathcal{D})] \nonumber \\
%&=& P(x,y\in \theta[z]\ | \ z\in \mathcal{D} \subset D)
		\label{eqn_kernel}
	\end{eqnarray}
\end{definition}

In practice, $\kappa_I$ is constructed using a finite number of partitionings $H_i, i=1,\dots,t$, where each $H_i$ is created using randomly subsampled $\mathcal{D}_i \subset D$; and $\theta$ is a shorthand for $\theta[z]$:
\begin{eqnarray}
\kappa_I(x,y\ |\ D)  & = &  \frac{1}{t} \sum_{i=1}^t   \mathds{1}(x,y \in \theta\ | \ \theta \in H_i) \nonumber\\
 & = & 
 \frac{1}{t} \sum_{i=1}^t \sum_{\theta \in H_i}   \mathds{1}(x\in \theta)\mathds{1}(y\in \theta) 
 \label{Eqn_IK}
\end{eqnarray}

Isolation Kernel is positive semi-definite as Eq \ref{Eqn_IK} is a quadratic form. Thus, Isolation Kernel defines a  RKHS $\mathscr{H}$.

The isolation partitioning mechanisms which have been used previously to implement Isolation Kernel are iForest \cite{ting2018IsolationKernel}, and Voronoi diagram \cite{IsolationKernel-AAAI2019} (they are applied to SVM classifiers and density-based clustering.)

\section{Proposed Isolation Kernel}
\label{sec_proposed_IK}
Here we introduce a new implementation of Isolation Kernel, together with its exact and finite feature map, and its data dependent property in the following three subsections.

\subsection{A new implementation of Isolation Kernel}
\label{sec_IK_implementation}
An isolation mechanism for Isolation Kernel which has not been employed previously is given as follows: 

Each point $z \in \mathcal{D}$ is isolated from the rest of the points in $\mathcal{D}$ by building a hypersphere that covers $z$ only. The radius of the hypersphere
is determined by the distance between $z$ and its nearest neighbor in $\mathcal{D} \setminus \{z\}$. In other words, a partitioning $H$ consists of $\psi$ hyperspheres $\theta[z]$ and the $(\psi+1)$-th partition. The latter is the region in $\mathbb{R}^d$ which is not covered by all $\psi$ hyperspheres. Note that  $2 \le \psi < |D|$. Figure \ref{fig_example} (left) shows an example using $\psi=3$. 

This mechanism has been shown to produce large partitions in a sparse region and small partitions in a dense region \cite{iNNE}. Although it was used as a point anomaly detector called iNNE \cite{iNNE}, its use in creating a kernel is new.

\subsection{Feature map of the new Isolation Kernel}
\label{sec_properties}
Given a partitioning $H_i$, let $\Phi_i(x)$ be a $\psi$-dimensional binary column vector representing all hyperspheres $\theta_j \in H_i$, $j=1,\dots,\psi$; where $x$ falls into either only one of the $\psi$ hyperspheres or none.
The $j$-component of the vector is:
$\Phi_{ij}(x)=\mathds{1}(x\in \theta_j\ |\ \theta_j\in H_i)$. Given $t$ partitionings, $\Phi(x)$ is the concatenation of $\Phi_1(x),\dots,\Phi_t(x)$.

\begin{definition}
	\label{def:featureMap}
	\textbf{Feature map of Isolation Kernel.}
	For point $x \in \mathbb{R}^d$, the feature mapping $\Phi: x\rightarrow \mathbb \{0,1\}^{t\times \psi}$ of $\kappa_I$ is a vector that represents the partitions in all the partitioning $H_i\in \mathds{H}_\psi(D)$, $i=1,\dots,t$; where $x$ falls into either only one of the $\psi$ hyperspheres or none in each partitioning $H_i$.
\end{definition}

Let $\mathbbmtt{1}$ be a shorthand of $\Phi_i(x)$ such that $\Phi_{ij}(x)=1$ and $\Phi_{ik}(x)=0, \forall k \neq j$ for any $j \in [1,\psi]$. 

$\Phi$ has the following geometrical interpretation:
\begin{itemize}
\setlength\itemsep{0.2em}
 \item[(a)] ${\Phi}(x)= \left[\mathbbmtt{1}, \dots, \mathbbmtt{1}  \right]$: $\Vert {\Phi}(x) \Vert\ = \sqrt{t}$ and $\kappa_I(x,x|D)=1$ iff $\Phi_i(x) \ne \mathbf{0}$ for all $i \in [1,t]$.
 \item[(b)]  For point $x$ such that $\exists i \in [1,t], \Phi_i(x) = \mathbf{0}$; then $\Vert {\Phi}(x) \Vert\ < \sqrt{t}$.
 \item[(c)]  If point $x \in \mathbb{R}^d$ falls outside of all hyperspheres in $H_i$ for all $i \in [1,t]$, then it is mapped to the origin of the feature space $\Phi(x) = \left[\mathbf{0},\dots,\mathbf{0}\right]$.
\end{itemize}

\begin{figure}
    \centering
    {\includegraphics[height=.18\textwidth,width=.40\textwidth]{./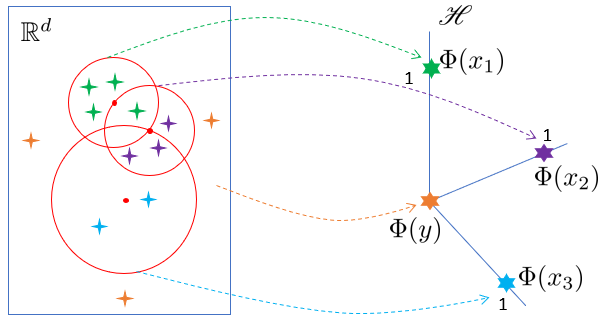}}
  \caption{An illustration of feature map  ${\Phi}$ of Isolation Kernel with one partitioning ($t=1$) of three hyperspheres, each centred at a point (as red dot)  $z \in \mathcal{D}$  where $|\mathcal{D}|=\psi=3$ points are randomly selected from the given dataset $D$.  When a point $x$ falls into an overlapping region, $x$ is regarded to be in the hypersphere whose centre is closer to $x$. }
  \label{fig_example}
\end{figure}

Let $T$ be a set of normal points and $S$ a set of anomalies. 
Let the given dataset $D= T$  which consists of normal points only\footnote{This assumption is for clarity of the exposition only; and $D$ is used to derive Isolation Kernel. In practice, $D$ could contain anomalies but has a minimum impact on $\kappa_I$. See the details later.}.
In the context of anomaly detection, assuming that $\mathcal{P}_T$ is the distribution of normal points $x$ and the largely different $\mathcal{P}_S$ is the distribution of point anomalies $y$. 
The geometrical interpretation gives rise to: (i) point anomalies $y \in S$ are mapped close to the origin of RKHS because  
they are different from normal points $x \in T$---they largely satisfy condition (c) and sometimes (b); and (ii) $\Phi$ of individual normal points $x \in T$ have norm equal or close to $\sqrt{t}$---they largely satisfy condition (a) and sometimes (b). In other words, normal points are mapped to or around  $\left[\mathbbmtt{1}, \dots, \mathbbmtt{1}  \right]$. Note that  $\left[\mathbbmtt{1}, \dots, \mathbbmtt{1}  \right]$ is not a single point in RKHS, but points which have $\Vert {\Phi}(x) \Vert\ = \sqrt{t}$ and $\Phi_i(x) = \mathbbmtt{1}$ for all $i \in [1,t]$.

Figure \ref{fig_example} shows an example mapping $\Phi$ of Isolation Kernel using $\psi=3$ and $t=1$, where all points falling into a particular hypersphere are mapped to the same point in RKHS. Those points which fall outside of all hyperspheres are mapped to the origin of RKHS.

The previous implementations of Isolation Kernel \cite{ting2018IsolationKernel,IsolationKernel-AAAI2019} possess condition (a) only. Their capability to separate anomalies from normal points  by using the norm of $\Phi$ assumes that anomalies will fall on the partitions at the fringes of the data distribution. While this works for many anomalies, this capacity is weaker.

Note that Definition \ref{def:featureMap} is an \textbf{exact feature map} of Isolation Kernel. Re-express Eq \ref{Eqn_IK} using $\Phi$ gives:
  \begin{eqnarray}
\kappa_I(x,y\ |\ D) = \frac{1}{t} \left< \Phi(x|D), \Phi(y|D) \right>
   \label{eqn_IK_feature_map}
\end{eqnarray}

In contrast, existing finite-dimensional feature map derived from a data independent kernel is an \textbf{approximate feature map}, i.e.,
  \begin{eqnarray}
\kappa(x,y) \approx \left< \varphi(x), \varphi(y) \right>
   \label{eqn_approx_feature_map}
\end{eqnarray}

\subsection{Data dependent property of new Isolation Kernel}
The new partitioning mechanism produces large hyperspheres in a sparse region and small hyperspheres in a dense region. 
This yields the following property~\cite{ting2018IsolationKernel}:  \emph{two points in a sparse region are more similar than two points of equal inter-point distance in a dense region.}

Here we provide a theorem for the equivalent property: \textbf{two points, as measured by Isolation Kernel derived in a sparse region, are more similar than the same two points, as measured by Isolation Kernel derived in a dense region.}

\begin{thm}\label{lem_property}
Given two probability distributions $\mathcal{P}_D, \mathcal{P}_{D'} \in \mathbb{P}$ from which points in datasets $D$ and $D'$ are drawn, respectively. Let $\mathcal{E} \subset \mathcal{X}$ be a region such that $\forall_{w\in \mathcal{E}},  \mathcal{P}_D(w)<\mathcal{P}_{D'}(w)$, {\it i.e.}, $D$ is sparser than $D'$ in $\mathcal{E}$. Assume that $\psi=|\mathcal{D}|$ is large such that $\tilde{z}$ is the nearest neighbour of $z$, where $z, \tilde{z}  \in \mathcal{D} \subset D$ in $\mathcal{E}$, under a given metric distance $\ell$ (the same applies to $z', \tilde{z}'  \in \mathcal{D'} \subset D'$ in $\mathcal{E}$.)
Isolation Kernel $\kappa_I$ based on hyperspheres $\theta(z) \in H$ has the property that $\kappa_I( x, y\ |\ D) > \kappa_I( x, y\ |\ D')$ for any point-pair $x,y \in \mathcal{E}$.
\end{thm}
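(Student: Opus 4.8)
The plan is to recast $\kappa_I$ as a \emph{coverage probability} and reduce the claim to a stochastic comparison of nearest-neighbour distances. By Definition~\ref{IKernel}, $\kappa_I(x,y\mid D)=\Pr_{\mathcal{D}\subset D}\big(x,y\in\theta[z]\text{ for a common }z\in\mathcal{D}\big)$, the probability, over the random draw of the $\psi$-subset $\mathcal{D}$, that $x$ and $y$ are isolated by the same hypersphere. A centre $z$ covers both points exactly when its radius $r_z=\ell(z,\tilde z)$ (the distance to its nearest neighbour in $\mathcal{D}$) satisfies $r_z\ge R(z):=\max(\ell(x,z),\ell(y,z))$. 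First I would write the expected number of such covering centres as
\begin{equation}
N_D=\psi\int \mathcal{P}_D(z)\,\Pr\!\big(r_z\ge R(z)\big)\,dz,\qquad \Pr\!\big(r_z\ge s\big)=\big(1-\mu_D(B(z,s))\big)^{\psi-1},
\label{eqn_plan_ND}
\end{equation}
where $B(z,s)$ is the ball of radius $s$ about $z$ and $\mu_D(B(z,s))=\int_{B(z,s)}\mathcal{P}_D(w)\,dw$ is its mass. The hypothesis that $\psi$ is large keeps both $z$ and $\tilde z$ inside $\mathcal{E}$, localising the whole analysis there, and it makes the covering event rare, so that at most one centre covers both points; the nearest-centre assignment rule is then vacuous and $\kappa_I(x,y\mid D)$ agrees with $N_D$ to leading order.

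The key lemma is a stochastic-dominance statement for the isolating radius. Since $\mathcal{P}_D(w)<\mathcal{P}_{D'}(w)$ for all $w\in\mathcal{E}$, every ball contained in $\mathcal{E}$ carries strictly less mass under $D$, i.e. $\mu_D(B(z,s))<\mu_{D'}(B(z,s))$; substituting into the tail formula in~(\ref{eqn_plan_ND}) gives $\Pr(r_z\ge s\mid D)>\Pr(r_z\ge s\mid D')$ for every $s$. Hence the radius in the sparser dataset stochastically dominates that in $D'$: sparsity enlarges the hyperspheres, which is exactly the geometric mechanism the theorem asserts.

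The crux, and the step I expect to be the main obstacle, is the competition inside~(\ref{eqn_plan_ND}): the sparser $D$ supplies \emph{fewer} admissible centres near $x,y$ (the prefactor $\mathcal{P}_D(z)<\mathcal{P}_{D'}(z)$ works against the claim), yet each surviving centre has a larger radius and hence larger coverage probability. I would settle this by working in the large-$\psi$ regime, using $(1-\mu)^{\psi-1}\approx\exp(-\psi\mu)$ with $\mu_D(B(z,s))\approx \mathcal{P}_D(z)\,V_d\,s^d$ ($V_d$ the unit-ball volume) and a Laplace estimate concentrated at the minimal-$R$ centre $z^\ast$, giving
\begin{equation}
\frac{N_D}{N_{D'}}\ \sim\ \frac{\mathcal{P}_D(z^\ast)}{\mathcal{P}_{D'}(z^\ast)}\,\exp\!\Big(\psi\,V_d\,R(z^\ast)^d\big(\mathcal{P}_{D'}(z^\ast)-\mathcal{P}_D(z^\ast)\big)\Big).
\label{eqn_plan_ratio}
\end{equation}
The bounded prefactor $\mathcal{P}_D/\mathcal{P}_{D'}<1$ is overwhelmed by the exponential factor, which is strictly greater than $1$ because $\mathcal{P}_{D'}>\mathcal{P}_D$ on $\mathcal{E}$ and $R(z^\ast)\ge\tfrac12\ell(x,y)>0$ stays bounded away from zero. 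This is precisely why the hypothesis ``$\psi$ large'' is indispensable: it is the exponential amplification of the larger radii that defeats the smaller centre count, yielding $N_D>N_{D'}$ and therefore $\kappa_I(x,y\mid D)>\kappa_I(x,y\mid D')$. I would close the argument by making the Laplace concentration rigorous (a uniform bound on the integrand away from $z^\ast$) and by checking that the rare-event approximation $\kappa_I\approx N_D$ is two-sided, so the strict inequality is preserved.
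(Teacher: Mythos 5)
Your proposal is sound in spirit and arrives at the theorem by a genuinely different organization of the same underlying mechanism. The shared core is the observation that a fixed geometric ball inside $\mathcal{E}$ carries strictly less probability mass under the sparser $\mathcal{P}_D$, so the tail probability $(1-\mu_D(B))^{\psi-1}$ of the isolating radius exceeding a fixed threshold is strictly larger. The paper exploits this differently: it conditions on $z$ being the \emph{common nearest neighbour} of $x$ and $y$, uses the triangle inequality to decompose the coverage event into distance-ordering cases (e.g.\ $\ell_{z\tilde z}>\ell_{xz}>\ell_{yz}>\ell_{xy}/2$), and then changes variables to probability-mass coordinates, where the nearest-neighbour-distance densities $\psi(1-u)^{\psi-1}$ and $(\psi-1)(1-u)^{\psi-2}$ (from Fukunaga) are distribution-free; all density dependence is pushed into the integration limits, each of which moves monotonically in the right direction, so no competition between effects ever arises. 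Your first-moment formulation, by contrast, keeps geometric coordinates and therefore must confront the competition you correctly identify between the smaller centre density $\mathcal{P}_D(z)$ and the larger per-centre coverage probability; you resolve it with a Laplace/large-$\psi$ estimate showing the exponential gap dominates the polynomial prefactor. What your route buys is an explicit, quantitative explanation of \emph{why} the hypothesis ``$\psi$ large'' is needed and where the inequality could fail for small $\psi$; what the paper's route buys is that, after the change of variables, the monotonicity is termwise and no asymptotic expansion is required.

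One step in your sketch is too glib: the claim that the nearest-centre assignment rule is ``vacuous'' because the covering event is rare. Only the \emph{simultaneous} coverage of $x$ and $y$ by one ball is exponentially rare; the event that $x$ alone is covered by some other, closer centre is not rare at all, so a centre covering both points can still lose $x$ or $y$ to a competitor with non-negligible probability. This makes your expected count $N_D$ an overestimate of $\kappa_I$ by a bounded multiplicative factor rather than an asymptotically exact proxy. The conclusion survives because the $D$-versus-$D'$ separation is exponential in $\psi$ and swamps any bounded correction, but you should state and bound that correction explicitly (the paper sidesteps it entirely by conditioning on the common nearest neighbour from the outset).
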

\begin{proof} Let $\ell$ between $x$ and $y$ be $\ell_{xy}$. $x,y\in \theta[z]$ if and only if the nearest neighbour of both $x$ and $y$ is $z$ in $\mathcal{D}$, and   $\ell_{z\tilde{z}}>\max(\ell_{xz},\ell_{yz})$ holds for $\tilde{z}$ the nearest neighbour of $z$ in $\mathcal{D}$. Moreover, the triangular inequality $\ell_{xz}+\ell_{yz}>\ell_{xy}$ holds because $\ell$ is a metric distance. Accordingly, 
\begin{eqnarray}
&&\hspace{-8mm}P(x,y\in \theta[z]\ | \ z\in \mathcal{D} \subset D)\nonumber\\
&&\hspace{-2.8mm}=P(\{\ell_{z\tilde{z}}>\ell_{xz}>\ell_{yz}\}\land\{\ell_{yz}>\ell_{xy}-\ell_{yz}\})+\nonumber\\
&&P(\{\ell_{z\tilde{z}}>\ell_{xz}>\ell_{xy}-\ell_{yz}\}\land\{\ell_{xy}-\ell_{yz}>\ell_{yz}\})+\nonumber\\
&&P(\{\ell_{z\tilde{z}}>\ell_{yz}>\ell_{xz}\}\land\{\ell_{xz}>\ell_{xy}-\ell_{xz}\})+ \label{pxyA0}\\ 
&&P(\{\ell_{z\tilde{z}}>\ell_{yz}>\ell_{xy}-\ell_{xz}\}\land\{\ell_{xy}-\ell_{xz}>\ell_{xz}\})\nonumber\\
&&\hspace{-2.8mm}=2P(\{\ell_{z\tilde{z}}>\ell_{xz}>\ell_{yz}\}\land\{\ell_{yz}>\ell_{xy}-\ell_{yz}\})+\nonumber\\
&&2P(\{\ell_{z\tilde{z}}>\ell_{xz}>\ell_{xy}-\ell_{yz}\}\land\{\ell_{xy}-\ell_{yz}>\ell_{yz}\})\nonumber
\end{eqnarray}
subject to the nearest neighbour $z \in \mathcal{D}$ of both $x$ and $y$. The last equality holds by the symmetry of $\ell_{xz}$ and $\ell_{yz}$.

Given a hypersphere $v(c,\ell_{cz})$ centered at $c \in \mathcal{E}$ and having radius $\ell_{cz}$ equal to the distance from $c$ to its nearest neighbour $z \in \mathcal{D}$, let $\mathcal{P}(u(c,\ell_{cz}))$ be the probability density of probability mass $u(c,\ell_{cz})$ in $v(c,\ell_{cz})$; $u(c,\ell_{cz})=\int_{v(c,\ell_{cz})} \mathcal{P}_D(w)dw$. Note that $u(c,\ell_{cz})$ is strictly monotonic to $\ell_{cz}$ if $v(c,\ell_{cz}) \cap \mathcal{X} \neq \emptyset$, since $\mathcal{P}_D$ is strictly positive in $\mathcal{X}$. Then, the followings are derived.
\begin{eqnarray}
&&\hspace{-10mm}P(\{\ell_{z\tilde{z}}>\ell_{xz}>\ell_{yz}\}\land\{\ell_{yz}>\ell_{xy}-\ell_{yz}\})\nonumber\\
&&\hspace{-4.8mm}=P(\ell_{z\tilde{z}}>\ell_{xz}>\ell_{yz}>\ell_{xy}/2)\nonumber\\
&&\hspace{-4.8mm}=\int_{u(z,\ell_{xy}/2)}^1\mathcal{P}(u(z,\ell_{z\tilde{z}}))\int_{u(x,\ell_{xy}/2)}^{u(x,\ell_{z\tilde{z}})}\mathcal{P}(u(x,\ell_{xz})) \times\nonumber\\
&&\hspace{-2mm}\int_{u(y,\ell_{xy}/2)}^{u(y,\ell_{xz})}\mathcal{P}(u(y,\ell_{yz}))du(y,\ell_{yz})du(x,\ell_{xz})du(z,\ell_{z\tilde{z}}) \label{pxyA1}\\
&&\hspace{-4.8mm}\approx \int_{u(z,\ell_{xy}/2)}^{u(z,\hat{\ell}_{z\mathcal{E}})}\mathcal{P}(u(z,\ell_{z\tilde{z}}))\int_{u(x,\ell_{xy}/2)}^{u(x,\ell_{z\tilde{z}})}\mathcal{P}(u(x,\ell_{xz})) \times\nonumber\\
&&\hspace{-2mm}\int_{u(y,\ell_{xy}/2)}^{u(y,\ell_{xz})}\mathcal{P}(u(y,\ell_{yz}))du(y,\ell_{yz} du(x,\ell_{xz})du(z,\ell_{z\tilde{z}}),\nonumber
\end{eqnarray}
\begin{eqnarray}
&&\hspace{-10mm}P(\{\ell_{z\tilde{z}}>\ell_{xz}>\ell_{xy}-\ell_{yz}\}\land\{\ell_{xy}-\ell_{yz}>\ell_{yz}\})\nonumber\\
&&\hspace{-4.8mm}=P(\{\ell_{z\tilde{z}}>\ell_{xz}>\ell_{xy}-\ell_{yz}\}\land\{\ell_{xy}/2)>\ell_{yz}\})\nonumber\\
&&\hspace{-4.8mm}=\int_{u(z,\ell_{xy}/2)}^1 \mathcal{P}(u(z,\ell_{z\tilde{z}}))\int_{0}^{u(y,\ell_{xy}/2)}\mathcal{P}(u(y,\ell_{yz})) \times\nonumber\\
&&\hspace{-2mm}\int_{u(x,\ell_{xy}-\ell_{yz})}^{u(x,\ell_{z\tilde{z}})}\mathcal{P}(u(x,\ell_{xz}))du(x,\ell_{xz})du(y,\ell_{yz})du(z,\ell_{z\tilde{z}}) \label{pxyA2}\\
&&\hspace{-4.8mm} \approx \int_{u(z,\ell_{xy}/2)}^{u(z,\hat{\ell}_{z\mathcal{E}})}\mathcal{P}(u(z,\ell_{z\tilde{z}}))\int_{0}^{u(y,\ell_{xy}/2)}\mathcal{P}(u(y,\ell_{yz})) \times\nonumber\\
&&\hspace{-2mm}\int_{u(x,\ell_{xy}-\ell_{yz})}^{u(x,\ell_{z\tilde{z}})}\mathcal{P}(u(x,\ell_{xz}))du(x,\ell_{xz})du(y,\ell_{yz})du(z,\ell_{z\tilde{z}}),\nonumber
\end{eqnarray}
where $\hat{\ell}_{z\mathcal{E}}=\sup_{v(z,\ell_{z\tilde{z}}) \subseteq \mathcal{E}} \ell_{z\tilde{z}}$. The approximate equality holds by the assumption in the theorem which implies that the integral from $u(z,\hat{\ell}_{z\mathcal{E}})$ to $1$ for $u(z,\ell_{z\tilde{z}})$ is negligible. The same argument applied to $P(x,y\in \theta[z']\ | \ z'\in \mathcal{D}' \subset D')$ which derives the identical result.

\cite{Fukunaga-Book1990} provided the expressions of $\mathcal{P}(u(c,\ell_{cz}))$ and $\mathcal{P}(u(z,\ell_{z\tilde{z}}))$ as
\begin{eqnarray*}
\mathcal{P}(u(c,\ell_{cz})) &=& \psi(1-u(c,\ell_{cz}))^{\psi-1},\\
\mathcal{P}(u(z,\ell_{z\tilde{z}})) &=& (\psi-1)(1-u(z,\ell_{z\tilde{z}}))^{\psi-2}.
\end{eqnarray*}
With these expressions and the definition of $u(c,\ell_{cz})$, both $\mathcal{P}(u(c,\ell_{cz}))$ and $\mathcal{P}(u(z,\ell_{z\tilde{z}}))$ are lower if $\mathcal{P}_D(z)$ becomes higher. Accordingly,
\[P(x,y\in \theta[z]\ | \ z\in \mathcal{D} \subset D) > P(x,y\in \theta[z']\ | \ z'\in \mathcal{D}' \subset D')\]
holds by the fact $\forall_{w\in \mathcal{E}},  \mathcal{P}_D(w)<\mathcal{P}_{D'}(w)$, Eq.~\ref{pxyA0}, Eq.~\ref{pxyA1} and Eq.~\ref{pxyA2}.
This result and Definition~\ref{IKernel} prove the theorem. 
\end{proof}

Theorem \ref{lem_property} is  further evidence that the data dependent property of Isolation Kernel only requires that the isolation mechanism produces large partitions in sparse region and small partitions in dense region, regardless of the actual space partitioning mechanism. 
%We use hyperspheres to partition the space here; where the previous works use Voronoi diagram \cite{IsolationKernel-AAAI2019} and axis-parallel partitions in a tree structure  \cite{ting2018IsolationKernel}.

We introduce Isolation Distributional Kernel, its theoretical analysis and data dependent property in the next section. 

\section{Isolation Distributional Kernel}
\label{sec_IDK}
Given the feature map $\Phi$ (defined in Definition \ref{def:featureMap}) and Eq~\ref{eqn_IK_feature_map}, the empirical estimation of kernel mean embedding can be expressed based on the feature map of Isolation Kernel $\kappa_I(x,y)$.

\begin{definition}\label{IDKernel}
Isolation Distributional Kernel of two distributions $\mathcal{P}_S$ and $\mathcal{P}_T$ is given as: 
\begin{eqnarray}
\widehat{\mathcal{K}}_I(\mathcal{P}_S,\mathcal{P}_T\ |\ D) 
 & = & \frac{1}{t|S||T|}\sum_{x\in S} \sum_{y\in T} \Phi(x|D)^\top\Phi(y|D)\nonumber\\
 & = & \frac{1}{t} \left< \widehat{\Phi}(\mathcal{P}_S|D), \widehat{\Phi}(\mathcal{P}_T|D) \right> \label{eqn_IDK}
\end{eqnarray}
where $\widehat{\Phi}(\mathcal{P}_S|D) = \frac{1}{|S|} \sum_{x \in S} \Phi(x|D)$ is the empirical feature map of the kernel mean embedding. 
\end{definition}
Hereafter, `$|D$' is omitted when the context is clear.

Condition (a) wrt $\Vert \Phi(x) \Vert$ in Section \ref{sec_properties} leads to $\Vert \widehat{\Phi}(x) \Vert\ = \sqrt{t}$; and similarly $0 \leq \Vert \widehat{\Phi}(x) \Vert\ < \sqrt{t}$ holds under conditions (b) and (c) in Section~\ref{sec_properties}. Thus, $\left< \widehat{\Phi}(\mathcal{P}_S), \widehat{\Phi}(\mathcal{P}_T) \right> \  \in [0,t]$ {\it i.e.}, $\widehat{\mathcal{K}}_I(\mathcal{P}_S,\mathcal{P}_T) \in [0,1]$.

The key advantages of IDK over existing kernel mean embedding \cite{EMK_Bo-NIPS2009,SupportMeasureMchines-NIPS2012,Sutherland-Thesis2016} are:
(i) $\Phi$ is an exact and finite-dimensional feature map of a data dependent point kernel; whereas $\varphi$ in Eq~\ref{eqn_MMK_approximation} is an approximate feature map of a data independent point kernel.
(ii) The distributional characterisation of $\widehat{\Phi}(\mathcal{P}_T)$ is derived from $\Phi$'s adaptability to local density in $T$; whereas the distributional characterisation of $\widehat{\varphi}(\mathcal{P}_T)$ lacks such adaptability because $\varphi$ of Gaussian kernel is data independent. This is despite the fact that both Isolation Kernel and Gaussian kernel are a characteristic kernel (see the next section.)

\subsection{Theoretical Analysis: Is Isolation Kernel a characteristic kernel?}
%in the context of kernel mean embedding?}
\label{sec_characteristicKernel}

As defined in subsection~\ref{sec_kernel_mean_embedding}, we consider $\mathcal{P}_S, \mathcal{P}_T \in \mathbb{P}$, where $\mathbb{P}$ is a set of probability distributions on $\mathbb{R}^d$ which are admissible but strictly positive  on $\mathcal{X}$ and strictly zero on $\overline{\mathcal{X}}$. This implies that no data points exist outside of $\mathcal{X}$. Thus we limit our analysis to the property of the kernel on $\mathcal{X}$. 
A positive definite kernel $\kappa$ is a characteristic kernel if its kernel mean map $\widehat{\Phi}: \mathbb{P} \rightarrow \mathscr{H}$ is injective, {\it i.e.}, $\Vert \widehat{\Phi}(\mathcal{P}_S) - \widehat{\Phi}(\mathcal{P}_T) \Vert_\mathscr{H} = 0$ if and only if $\mathcal{P}_S = \mathcal{P}_T$~\cite{KernelMeanEmbedding2017}. If the kernel $\kappa$ is non-characteristic,
two different distributions $\mathcal{P}_S \ne \mathcal{P}_T$ may be mapped to the same $\widehat{\Phi}(\mathcal{P}_S)=\widehat{\Phi}(\mathcal{P}_T)$. 
%Isolation Kernel with small $t$ is not a characteristic kernel based on the following facts.

Isolation Kernel derived from the partitioning $H_i$ can be interpreted that $\mathcal{X}$ is packed by hyperspheres having random sizes. This is called random-close packing~\cite{SoftMatter2014}. Previous studies 
%theoretically and experimentally 
revealed that the upper bound of the rate of the packed space in a 3-dimensional space is almost $64\%$ for any random-close packing~\cite{Nature2008,SoftMatter2014}. The packing rates for the higher dimensions are known to be far less than $100\%$ for any distribution of sizes of hyperspheres. This implies that the $(\psi+1)$-th partition, which is not covered by any hyperspheres, always has nonzero volume. 

Let $R \subset \mathcal{X}$ and $\overline{R} = \mathcal{X} \setminus R$ be regions such that $\forall x \in R$, $\mathcal{P}_S(x) \neq \mathcal{P}_T(x)$ and $\forall x \in \overline{R}$, $\mathcal{P}_S(x) = \mathcal{P}_T(x)$. 
From the fact that $\int_{\overline{R}} (\mathcal{P}_S(x)-\mathcal{P}_T(x))dx = 0$ and $\int_\mathcal{X} \mathcal{P}_S(x)dx = \int_\mathcal{X} \mathcal{P}_T(x)dx =1$,
\begin{equation}
\int_R (\mathcal{P}_S(x)-\mathcal{P}_T(x))dx = 0 
\label{eq_RPdiff}
\end{equation}
is deduced. In conjunction with this relation and the fact $\forall x \in R$, $\mathcal{P}_S(x) \neq \mathcal{P}_T(x)$, there exists at least one $R' \subset R$ such that 
\begin{equation}
%\quad \exists R' \subset R, 
\int_{R'}(\mathcal{P}_S(x)-\mathcal{P}_T(x))dx \neq 0.
\label{eq_Pdiff}
\end{equation}
This requires $R$ to contain at least two distinct points in $\mathcal{X}$.

Accordingly, a partitioning $H_i$ of Isolation Kernel satisfies one of the following two mutually exclusive cases:

\begin{itemize}[\itemindent=2em]
\setlength\itemsep{0em}
    \item[Case\,1:] $\exists \theta \in H_i, \theta \supseteq R$. From Eq~\ref{eq_RPdiff}, the probability of $x \sim \mathcal{P}_S$ falling into $\theta$ and that of $x \sim \mathcal{P}_T$ falling into $\theta$ are identical. Thus, the difference between $\mathcal{P}_S$ and $\mathcal{P}_T$ does not produce any difference between $\Phi(x \sim \mathcal{P}_S)$ and $\Phi(x \sim \mathcal{P}_T)$ in expectation.
    \item[Case\,2:] $\exists \theta \in H_i, \theta \cap R \ne \emptyset$ and $\theta \not\supseteq R$.  If $\theta \cap R$ is one of $R'$ satisfying Eq \ref{eq_Pdiff}, then $\Phi(x \sim \mathcal{P}_S)$ and $\Phi(x \sim \mathcal{P}_T)$ are different in expectation.
\end{itemize}

These observations give rise to the following theorem:

\begin{thm}\label{lemIKt}
%Given $\psi \ge 2$, 
The kernel mean map of the new Isolation Kernel (generated from $D$) $\ \widehat{\Phi}: \mathbb{P} \rightarrow \mathscr{H}$ is characteristic in $\mathcal{X}$ with probability $1$ in the limit of $t \rightarrow \infty$, for $\psi, t \ll |D|$.
\end{thm}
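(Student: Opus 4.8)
The plan is to show that the population mean map $\widehat{\Phi}$ sends a pair $\mathcal{P}_S \neq \mathcal{P}_T$ to the same point only on an event whose probability, taken over the random construction of the $t$ partitionings, vanishes as $t \to \infty$. First I would rewrite the mean map coordinate-wise. Each coordinate of $\Phi$ corresponds to a single hypersphere $\theta$ generated in some partitioning $H_i$, with $\Phi(x)_\theta = \mathds{1}(x \in \theta)$; hence the matching coordinate of the population mean map is $\widehat{\Phi}(\mathcal{P}_S)_\theta = \mathbb{E}_{x\sim\mathcal{P}_S}[\mathds{1}(x\in\theta)] = \int_\theta \mathcal{P}_S(w)\,dw$. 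Writing $\mu := \mathcal{P}_S - \mathcal{P}_T$ for the signed density difference, which is nonzero since $\mathcal{P}_S \neq \mathcal{P}_T$, this gives $\widehat{\Phi}(\mathcal{P}_S) = \widehat{\Phi}(\mathcal{P}_T)$ if and only if $\int_\theta \mu(w)\,dw = 0$ for every hypersphere $\theta$ appearing across all $t$ partitionings. Thus the map is non-characteristic on this pair exactly when no generated hypersphere is \emph{separating}, where I call $\theta$ separating when $\int_\theta \mu \neq 0$.

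Next I would lower-bound by a constant $p>0$ the probability that a single partitioning $H_i$ contains a separating hypersphere. By Eq~\ref{eq_Pdiff} there is a subregion $R' \subset R$ with $\int_{R'} \mu \neq 0$, and since $R$ must contain at least two distinct points of $\mathcal{X}$ and $\mathcal{P}_S, \mathcal{P}_T$ are strictly positive there, $R'$ can be taken of positive Lebesgue measure. The Case~1/Case~2 dichotomy established before the statement shows that a partitioning fails to separate when some $\theta \supseteq R$, because then $\int_\theta \mu = \int_R \mu = 0$ by Eq~\ref{eq_RPdiff}. The random-close-packing fact---the hyperspheres have random sizes capped by nearest-neighbour distances and leave the $(\psi+1)$-th region of nonzero volume uncovered---guarantees that, with positive probability over the random draw of $\mathcal{D}_i \subset D$, the construction is instead in Case~2 with a hypersphere whose intersection with $R$ carries a net nonzero $\mu$-mass. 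This yields $p = P(H_i \text{ contains a separating } \theta) > 0$, uniform in $i$ because the partitionings are identically distributed.

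Finally, since the $t$ partitionings are generated i.i.d., the probability that none contains a separating hypersphere is $(1-p)^t$, which tends to $0$ as $t \to \infty$. Hence, with probability $1$ in the limit, at least one generated hypersphere separates $\mathcal{P}_S$ and $\mathcal{P}_T$, so $\widehat{\Phi}(\mathcal{P}_S) \neq \widehat{\Phi}(\mathcal{P}_T)$; combined with Definition~\ref{IKernel} and the standard fact that balls form a determining class for measures on $\mathcal{X}$ (so that $\int_\theta \mu = 0$ over a rich enough generated family forces $\mu = 0$), this lifts the per-pair separation to injectivity of $\widehat{\Phi}$, i.e.\ that it is characteristic in $\mathcal{X}$. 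I expect the main obstacle to be the second step: converting the qualitative ``Case~2 is possible'' statement into the quantitative bound $p>0$. The delicate point is to argue that the sampling law of $\mathcal{D}_i$ assigns positive probability to generating a hypersphere that actually \emph{captures} a subregion $R'$ with $\int_{R'}\mu \neq 0$, rather than merely intersecting $R$ without isolating a net mass difference, and that this probability does not degenerate for any admissible $\mu \neq 0$.
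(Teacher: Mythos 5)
Your argument is essentially the paper's own proof: the Case~1/Case~2 dichotomy yields a per-partitioning separation probability $p>0$, and the near-independence of the $t$ partitionings (from $\psi, t \ll |D|$) drives the non-separation probability $(1-p)^t$ to $0$ --- the paper writes the complementary form $p^t \to 0$ with $p$ the probability of Case~1 occurring without Case~2, and likewise justifies $p<1$ by noting that the $(\psi+1)$-th uncovered region has nonzero volume and that $R$ contains at least two distinct points, exactly the delicate point you flag. The only substantive difference is that the paper's proof also establishes strict positive definiteness of the kernel (via full-rank Gram matrices as $t \to \infty$), which its definition of ``characteristic'' presupposes, whereas your closing appeal to balls forming a determining class is unnecessary, since injectivity is precisely the per-pair statement you have already obtained.
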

\begin{proof}
To define $H$, points in $\mathcal{D} \subset D$ are drawn from $\mathcal{P}_D(x)$ which is strictly positive on $\mathcal{X}$, {\it i.e.}, $\forall X \subseteq \mathcal{X} \mbox{ s.t. } X \neq \emptyset, \mathcal{P}_D(X)>0$. 
This implies that any partitioning $H$ of $\mathcal{X}$, created by $\mathcal{D}$, has non-zero probability, since the points in $\mathcal{D}$ can be anywhere in $\mathcal{X}$ with non-zero probability. Also recall that $\psi = |\mathcal{D}| = 2$ is the minimum sample size required to construct the hyperspheres in $H$ (see Section~\ref{sec_IK_implementation}.)  Let's call this: the property of $H$.\\
Due to the property of $H$ and $\psi \ge 2$, there exists $H_i \in \{H_1,\dots,H_t\}$ and $\theta_j \in H_i$ with probability $1$ such that $x \in \theta_j$ and $y \notin \theta_j$ for any mutually distinct points $x$ and $y$ in $\mathcal{X}$, as $t \rightarrow \infty$. This implies that, as $t \rightarrow \infty$, there exists $\Phi_{ij}(x)$ for any $x \in D$ with probability $1$ such that $\Phi_{ij}(x)=1$ and $\Phi_{ij}(y)=0, \forall y \in D, y \neq x$. Then, the Gram matrix of Isolation Kernel is full rank, because the feature maps $\Phi(x)$ for all points $x \in D$ are mutually independent. Accordingly, Isolation Kernel is a positive definite kernel with probability $1$ in the limit of $t \rightarrow \infty$.\\
Because of the property of $H$ and the fact that all $\theta \in H_i$ including the $\psi+1$-th partition have non-zero volumes for any $\psi \geq 2$, the probability of Case 1 is not zero. In addition, because $R$ contains at least two distinct points in $\mathcal{X}$, the probability of Case 2 is not zero for any $\psi \geq 2$.
These facts yield $0<p<1$, where $p$ is the probability of an event that $H_i$ satisfies Case\,1 but not Case\,2. 
%$p$ is not zero for a given $\psi$.
Since $\psi, t \ll |D|$, $H_i$ are almost independently sampled over $i=1,\dots,t$, and the probability of the event occurring over all $i=1,\dots,t$ is $p^t$. If $t \rightarrow \infty$, then $p^t \rightarrow 0$. This implies that Isolation Kernel is injective with probability $1$ in the limit of $t \rightarrow \infty$.\\
Both the positive definiteness and the injectivity imply that Isolation Kernel is a characteristic kernel.
\end{proof}
\vspace{-2mm}

Some data independent kernels such as Gaussian kernel are characteristic \cite{KernelMeanEmbedding2017}. Because an empirical estimation uses a finite dataset, their kernel mean maps that ensure injectivity~\cite{Sriperumbudur:2010} are as good as that using Isolation Kernel with large $t$.
To use the kernel mean map for anomaly detection, a point kernel deriving the kernel mean map must be characteristic. Otherwise, anomalies of $\mathcal{P}_S$ may not be properly separated from normal points of $\mathcal{P}_T$ because some anomalies and normal points may be mapped to an identical point $\widehat{\Phi}(\mathcal{P}_T)=\widehat{\Phi}(\mathcal{P}_S)$. 
As we show in the experiment section, Isolation Kernel using $t=100$ is sufficient to produce better result than Gaussian kernel in kernel mean embedding for anomaly detection.

\subsection{Data dependent property of IDK}

Following Definitions \ref{IDKernel}, IDK of two distributions $\mathcal{P}_S$ and $\mathcal{P}_T$ can be redefined as the expected probability over the probability distribution on all partitionings $H\in\mathds{H}_\psi(D)$ that a randomly selected point-pair $x\in S$ and $y\in T$ falls into the same isolation partition $\theta[z] \in H$, where $z \in \mathcal{D} \subset D$:
%$\psi=|\mathcal{D}|$:
\[\widehat{\mathcal{K}}_I(\mathcal{P}_S,\mathcal{P}_T )=\mathbb{E}_{\mathds{H}_\psi(D)}\left[\mathbb{I}(x,y\in \theta[z]\ |\ \theta[z] \in H;x\in S,y\in T)\right]
\]

If the supports of both $\mathcal{P}_S$ and $\mathcal{P}_T$ are included in $\mathcal{E}$ and $\forall_{w\in \mathcal{E}},$  $\mathcal{P}_D(w)<\mathcal{P}_{D'}(w)$ holds, the above expression leads to the following proposition, since every point-pair from $S$ and $T$ follows Theorem~\ref{lem_property}. 

\begin{prop} Under the conditions on $\mathcal{P}_D$ \& $\mathcal{P}_{D'}$, $D$ \& $D'$ and $\mathcal{E}$ of Theorem~\ref{lem_property}, given two distribution-pairs $\mathcal{P}_S, \mathcal{P}_T \in \mathbb{P}$ where the supports of both $\mathcal{P}_S$ and $\mathcal{P}_T$ are in $\mathcal{E}$, the IDK $\widehat{\mathcal{K}}_I$  based on hyperspheres $\theta(z) \in H$ has the property that \[\widehat{\mathcal{K}}_I(\mathcal{P}_S,\mathcal{P}_T | D) > \widehat{\mathcal{K}}_I(\mathcal{P}_{S},\mathcal{P}_{T} | D').\]
\end{prop}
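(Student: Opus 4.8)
The plan is to recognize that IDK is simply an average of the point kernel $\kappa_I$ over pairs drawn from the two distributions, so that the strict pairwise inequality already established in Theorem~\ref{lem_property} lifts directly to the distributional level. Concretely, combining Definition~\ref{IDKernel} with the exact feature-map identity Eq~\ref{eqn_IK_feature_map}, I would first record the identity
\[
\widehat{\mathcal{K}}_I(\mathcal{P}_S,\mathcal{P}_T\mid D) = \mathbb{E}_{x\sim\mathcal{P}_S,\,y\sim\mathcal{P}_T}\big[\kappa_I(x,y\mid D)\big],
\]
which is exactly the expected-probability reformulation stated immediately before the proposition; the same identity holds verbatim with $D$ replaced by $D'$.

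The second step uses the hypothesis that the supports of both $\mathcal{P}_S$ and $\mathcal{P}_T$ lie inside $\mathcal{E}$. This forces every pair $(x,y)$ in the support of the product $\mathcal{P}_S\times\mathcal{P}_T$ to satisfy $x,y\in\mathcal{E}$, so the hypotheses of Theorem~\ref{lem_property} apply to each such pair. Theorem~\ref{lem_property} then delivers the strict pointwise gap $\kappa_I(x,y\mid D) > \kappa_I(x,y\mid D')$ for every pair in that support.

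The final step transfers this pointwise strict inequality to the two expectations by monotonicity of integration. The one subtlety here—more a matter of bookkeeping than a genuine obstacle—is preserving \emph{strictness}, since monotonicity of expectation a priori yields only a weak inequality. One resolves this by noting that Theorem~\ref{lem_property} supplies a strict gap at \emph{every} point of the support, not merely almost everywhere, and that $\mathcal{P}_S,\mathcal{P}_T\in\mathbb{P}$ are strictly positive on $\mathcal{E}$; hence the product measure $\mathcal{P}_S\times\mathcal{P}_T$ assigns positive mass to the region where the strict gap holds, so the strict inequality survives integration and
\[
\widehat{\mathcal{K}}_I(\mathcal{P}_S,\mathcal{P}_T\mid D) > \widehat{\mathcal{K}}_I(\mathcal{P}_S,\mathcal{P}_T\mid D')
\]
follows. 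Everything else is a direct consequence of linearity, so the work of the proof is entirely inherited from Theorem~\ref{lem_property}.
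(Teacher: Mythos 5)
Your proposal is correct and follows essentially the same route as the paper: the paper likewise rewrites $\widehat{\mathcal{K}}_I$ as the expectation of the point kernel over pairs drawn from $\mathcal{P}_S\times\mathcal{P}_T$ and invokes Theorem~\ref{lem_property} on every pair in the support, which lies in $\mathcal{E}$ by hypothesis. Your added remark on why strictness survives integration is a detail the paper leaves implicit, but it does not change the argument.
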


In other words, the data dependent property of Isolation Kernel leads directly to the data dependent property of IDK:\\ \textbf{Two distributions, as measured by IDK derived in sparse region, are more similar than the same two distributions, as measured by IDK derived in dense region}.

We organize the rest of the paper as follows. For point anomaly detection: the proposed method, its experiments and relation to isolation-based anomaly detectors are provided in Sections \ref{sec_anomaly_detection}, \ref{sec_experiments} and \ref{sec_relation}, respectively. For group anomaly detection: we present the background, the proposed method, conceptual comparison, time complexities and the experiments in the next five sections; and we conclude in the last section.

\section{Proposed Method for\\ Point Anomaly Detection}
\label{sec_anomaly_detection}
Let the kernel mean embedding $\widehat{\mathcal{K}}$ has $\widehat{\Phi}(\mathcal{P}_D)$ be a kernel mean mapped point of a distribution $\mathcal{P}_D$ of a dataset $D$. 
$\widehat{\mathcal{K}}$ measures the similarity of a Dirac measure $\delta(x)$ of a point $x$ wrt $\mathcal{P}_D$ as follows:

\begin{itemize}
    \item If $x \sim \mathcal{P}_D$, $\widehat{\mathcal{K}}(\delta(x),\mathcal{P}_D)$ is large, which can be interpreted as $x$ is likely to be part of $\mathcal{P}_D$.

    \item If $y \not\sim \mathcal{P}_D$, $\widehat{\mathcal{K}}(\delta(y),\mathcal{P}_D)$ is small, which can be interpreted as $y$ is not likely to be part of $\mathcal{P}_D$.
\end{itemize}

With this interpretation, $y \not\sim \mathcal{P}_D$ can be naturally used to identify anomalies in $D$. The definition of anomaly is thus:

`{\em Given a similarity measure $\widehat{\mathcal{K}}$ of two distributions, an anomaly is an observation whose Dirac measure $\delta$ has a low similarity with respect to the distribution from which a reference dataset is generated.}' 

This is an operational definition, based on distributional kernel $\widehat{\mathcal{K}}$, of the one given by Hawkins (1980):

`An outlier is an observation which deviates so much from the other observations as to arouse suspicions that it was generated by a different mechanism.'

Let $\mathcal{P}_T$ and $\mathcal{P}_S$ be the distributions of normal points and point anomalies, respectively. We assume that $\forall x \in T, x \sim \mathcal{P}_T$; and $S$ consists of subsets of anomalies $\mathsf{S}_i$ from  mutually distinct distributions, i.e.,  $S = \cup_{i=1}^{m} \mathsf{S}_i$, where $\mathcal{P}_{\mathsf{S}_i}=\delta(x_i)$ is the distribution of an anomaly $x_i$ represented as a Dirac measure $\delta$, and $\mathsf{S}_i=\{x_i\}$. 

Note that in the unsupervised learning context, only the dataset $D = S \cup T$ is given without information about $S$ and $T$, where the majority of the points in $D$ are from $\mathcal{P}_T$; but $D$ also contains few points from $\mathcal{P}_S$. Because $|S| \ll |T|$, $\mathcal{P}_D \approx \mathcal{P}_T$. The kernel mean map of $\mathcal{P}_T$ is empirically estimated from $D$, i.e.,  $\widehat{\Phi}(\mathcal{P}_T) \approx \widehat{\Phi}(\mathcal{P}_D) = \frac{1}{|D|} \sum_{x \in D} \Phi(x)$ is thus robust against influences from $\mathcal{P}_S$ because the few points in $S$ have significantly lower weights than the many points in $T$. Therefore, $\widehat{\Phi}(\mathcal{P}_T) \approx \widehat{\Phi}(\mathcal{P}_D)$ is in a region distant from those of  $\widehat{\Phi}(\mathcal{P}_{\mathsf{S}_i})$. In essence, \emph{IDK derived from $D$ is robust against contamination of anomalies in $D$}.

%Given $D$, IDK is constructed using $D$; and then project $\mathcal{P}_D$ and $\delta(x)\ \forall x \in D$ to two distinct points: $\widehat{\Phi}(\mathcal{P}_D)$ and $\widehat{\Phi}(\delta(x))$ in $\mathscr{H}$ to compute their similarity, following Eq \ref{eqn_IDK}.

\textbf{Point anomaly detector $\widehat{\mathcal{K}}$}: Given $D$, the proposed detector is one which summarizes the entire dataset $D$ into one mapped point $\widehat{\Phi}(\mathcal{P}_D)$. To detect point anomalies, map each $x \in D$ to $\widehat{\Phi}(\delta(x))$;
and compute its similarity w.r.t. $\widehat{\Phi}(\mathcal{P}_D)$, i.e., $\left< \widehat{\Phi}(\delta(x)), \widehat{\Phi}(\mathcal{P}_D) \right>$. Then sort all computed similarities to rank all points in $D$. Anomalies are those points which are least similar to $\widehat{\Phi}(\mathcal{P}_D)$. The anomaly detector due to IDK is computed using  Eq~\ref{eqn_IDK}, denoted as $\widehat{\mathcal{K}}_I$; and the detectors using Gaussian kernel and its Nystr\"{o}m approximation are computed using Eq \ref{eqn_mmk} and Eq \ref{eqn_MMK_approximation}, denoted as $\widehat{\mathcal{K}}_G$ and $\widehat{\mathcal{K}}_{NG}$.

\begin{figure}[t]
\includegraphics[height=.16\textwidth,width=.44\textwidth]{./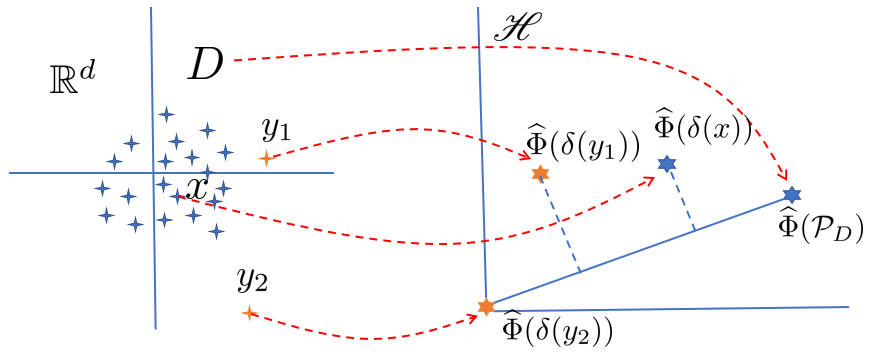}
    \hspace{15mm}
  \vspace{-6mm}
\caption{An illustration of kernel mean mapping $\widehat{\Phi}$ used in $\widehat{\mathcal{K}_I}$ for a dataset $D$, containing anomalies $y_1$ \& $y_2$. $\widehat{\Phi}$ maps from $\mathbb{R}^d$ of $D$ \& individual points to points in $\mathscr{H}$.}
  \label{fig_example2}
  \vspace{-2mm}
\end{figure}

Figure \ref{fig_example2} shows an example mapping $\widehat{\Phi}$ of a normal point $x$, two anomalies $y_1$ and $y_2$ as well as $D$ from $\mathbb{R}^d$ to $\mathscr{H}$. 
The global anomaly $y_2$ is mapped to the origin of $\mathscr{H}$; and $y_1$ which is just outside the fringe of a normal cluster is mapped to a position where
$\widehat{\mathcal{K}}_I(\delta(y_1), \mathcal{P}_D)$
%$\left<\widehat{\Phi}(\delta(y_1)), \widehat{\Phi}(\mathcal{P}_D)\right>$ 
is closer to the origin than that of normal points.

\section{Experiments: point anomaly detection}
\label{sec_experiments}

The detection accuracy of an anomaly detector is measured in terms of AUC (Area under ROC curve). As all the anomaly detectors are unsupervised learners, all models are trained with unlabeled training sets. Only after the models have made predictions, ground truth labels are used to compute the AUC for each dataset. We report the runtime of each detector in terms of CPU seconds.

We compare kernel based anomaly detectors $\widehat{\mathcal{K}}_I$, $\widehat{\mathcal{K}}_G$ and $\widehat{\mathcal{K}}_{NG}$ (using the Nystr\"{o}m method \cite{musco2017recursive} to produce an approximate feature map) and 
%iForest \cite{liu2008isolation}, iNNE \cite{iNNE}, 
OCSVM \cite{OCSVM2001} which employs Gaussian kernel\footnote{Another type of `kernel' based anomaly detectors relies on a kernel density estimator (KDE) as the core operation~\cite{KDEOS,Local_kernel_density}.  These methods have $O(n^2)$ time complexity.
We compare our method with kernel methods that employ a kernel as similarity measure like OCSVM \cite{OCSVM2001}, which are not a density based method that relies on KDE or a density estimator \cite{FastLOF-TKDE2016}. A comparison between this type of anomaly detectors and isolation-based anomaly detectors is given in \cite{iNNE}.}.
Parameter settings used in the experiments: $\widehat{\mathcal{K}}_I$  uses $t=100$; and $\psi$ is searched over $\psi\in\{2^m\ |\ m=1,2...12\}$. For all methods using Gaussian kernel, the bandwidth is searched over $\{2^m\ |\ m=-5,\dots 5\}$. 
The sample size of the Nystr\"{o}m method is set as $\sqrt{n}$ which is also equal to the number of features. All datasets are normalized to $[0,1]$ in the preprocessing. 

All codes are written in Matlab running on Matlab 2018a. The experiments are performed on a machine with 4$\times$2.60 GHz CPUs and 8GB main memory.

\begin{table}[t]
		\centering
		\caption{Results of kernel based anomaly detectors (AUC). }

		\label{tbl_point_anomalies}
		\hspace{-2mm}
		\begin{tabular}{lrrrcccc}
			\toprule
			Dataset &\#Inst&\#Ano&\#Attr& $\widehat{\mathcal{K}}_I$ &$\widehat{\mathcal{K}}_G$&$\widehat{\mathcal{K}}_{NG}$&OCSVM  \\ \midrule
			speech&3,686&61&400 &0.76&0.46&0.47&0.65\\
			EEG\_eye&8,422&165&14&0.88&0.55&0.47&0.54\\
			PenDigits&9,868&20&17&0.98&0.98&0.95&0.98\\
			MNIST\_230&12,117&10&784&0.98&0.97&0.97&0.96\\
			MNIST\_479&12,139&50&784&0.86&0.69&0.60&0.59\\
			mammograg &11,183&260&6&0.88&0.85&0.86&0.84\\
			electron &37,199&700&50&0.80&0.65&0.57&0.64\\
			shuttle&49,097&3,511&9&0.98&0.98&0.99&0.98\\
			ALOI &50,000&1,508&27 &0.82&0.60&0.54&0.53\\
			muon &94,066&500&50&0.82&0.63&0.55&0.75\\
			smtp&95,156&30&3&0.95&0.81&0.77&0.91\\
			IoT\_botnet&213,814&1,000&115&0.99&0.83&0.68&0.93\\
			ForestCover&286,048&2,747&10&0.97&0.85&0.86&0.96\\
			http&567,497&2,211&3&0.99&0.99&0.98&0.97\\
			\midrule
			{Average rank} & & & & 1.25&2.64&3.18&2.92
\\
			\bottomrule
		\end{tabular}
	\end{table}

\subsection{Evaluation of kernel based  detectors}
	
Table \ref{tbl_point_anomalies} shows that $\widehat{\mathcal{K}}_I$ 
	is the best anomaly detector among the four detectors.
The huge difference in AUC between $\widehat{\mathcal{K}}_I$ and $\widehat{\mathcal{K}}_G$ (e.g., speech, MNIST\_479 \& ALOI) shows the superiority of Isolation Kernel over data independent Gaussian kernel. As expected, $\widehat{\mathcal{K}}_{NG}$ performed worse than $\widehat{\mathcal{K}}_{G}$ in general because it uses an approximate feature map. 
A Friedman-Nemenyi test \cite{NemenyiTest-2006} in Figure~\ref{top4Nemenyi}  shows that $\widehat{\mathcal{K}}_I$ is significantly better than the other algorithms. 

Table \ref{tbl_runtime-pointanomaly} shows that $\widehat{\mathcal{K}}_I$ has short testing and training times. In contrast, $\widehat{\mathcal{K}}_G$ has the longest testing time though it has the shortest (zero) training time; and OCSVM has the longest training time. This is because its operations are based on points without feature map ($\widehat{\mathcal{K}}_G$ uses Eq~\ref{eqn_mmk}).  While $\widehat{\mathcal{K}}_{NG}$ (uses Eq~\ref{eqn_MMK_approximation}) has significantly reduced the testing time of $\widehat{\mathcal{K}}_G$, it still has  longer training time than $\widehat{\mathcal{K}}_I$ because of the Nystr\"{o}m process. 

Time complexities: Given two distributions which generates two sets, where each set has $n$ points, $\widehat{\mathcal{K}}_G$ takes $O(n^2)$. For $\widehat{\mathcal{K}}_I$, the preprocessing $\widehat{\Phi}(\mathcal{P}_S)$ of a set $S$ of $n$ points takes $O(nt\psi)$ and needs to be completed once only. $\widehat{\mathcal{K}}_I$ takes $O(t\psi)$ only to compute the similarity between two sets. Thus, the overall time complexity is $O(nt\psi)$.
For large datasets, $t\psi \ll n$ accounts for the huge difference in testing times between the two methods we observed in Table \ref{tbl_runtime-pointanomaly}. The time complexity of OCSVM in LibSVM is $O(n^3)$.

\begin{figure}[t]
\centering
%  \vspace{-2mm}
    \includegraphics[height=.08\textwidth,width=.28\textwidth]{./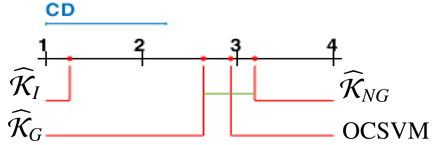}
%    \hspace{15mm}
  \caption{Friedman-Nemenyi test for anomaly detectors at significance level 0.05. If two algorithms
are connected by a CD (critical difference) line, then
there is no significant difference between them.}
  \label{top4Nemenyi}
\end{figure}
	
\begin{table}[t]
	%[!htbp]
		\centering
		\caption{Runtime comparison (in CPU seconds) on  http. }
		\label{tbl_runtime-pointanomaly}
		\begin{tabular}{lrrrr|rr}
			\toprule
			& $\widehat{\mathcal{K}}_I$&$\widehat{\mathcal{K}}_G$ &$\widehat{\mathcal{K}}_{NG}$ & OCSVM &iNNE &iForest\\ \midrule
				train &31 & 0&106&13738&15&6\\
				test &2&9689&1&1964&1&18\\
			\bottomrule
		\end{tabular}

	\end{table}

\subsection{OCSVM fails to detect local anomalies}

Here we examine the abilities of $\widehat{\mathcal{K}}_I$ %$\widehat{\mathcal{K}}_G$ 
and OCSVM to detect local anomalies  and clustered anomalies. The former is the type of anomalies located in close proximity to normal clusters;
and the latter is the type of anomalies which formed a small cluster located far from all normal clusters. 

Figure \ref{fig_local+clustered_anomalies} shows the distributions of similarities of $\widehat{\mathcal{K}}_I$ and OCSVM on an one-dimensional dataset having three normal clusters of Gaussian distributions (of different variances) and  a small group of clustered anomalies on the right. Note that OCSVM is unable to detect all anomalies located in close proximity to all three clusters, as all of these points have the same (or almost the same) similarity to points at the centers of these clusters. This outcome is similar to that using the density distribution (estimated by KDE) to detect anomalies because the densities of these points are not significantly lower than those of the peaks of low density clusters. 
In addition, the clustered anomalies have higher similarities, as measured by OCSVM, than many anomalies at either fringes of the normal clusters. This means that the clustered anomalies are not included in the top-ranked anomalies.

If local anomalies are defined as points having similarities between 0.25 and 0.75, then the distribution of similarity of $\widehat{\mathcal{K}}_I$ shows that it detects all local anomalies surrounding all three normal clusters; and regards the clustered anomalies as global anomalies (having similarity $< 0.25$). In contrast, OCSVM fails to detect many of the local anomalies detected by $\widehat{\mathcal{K}}_I$; and the clustered anomalies are regarded as local anomalies by OCSVM.

\begin{figure}
    \centering
    {\includegraphics[height=.2\textwidth,width=.38\textwidth]{./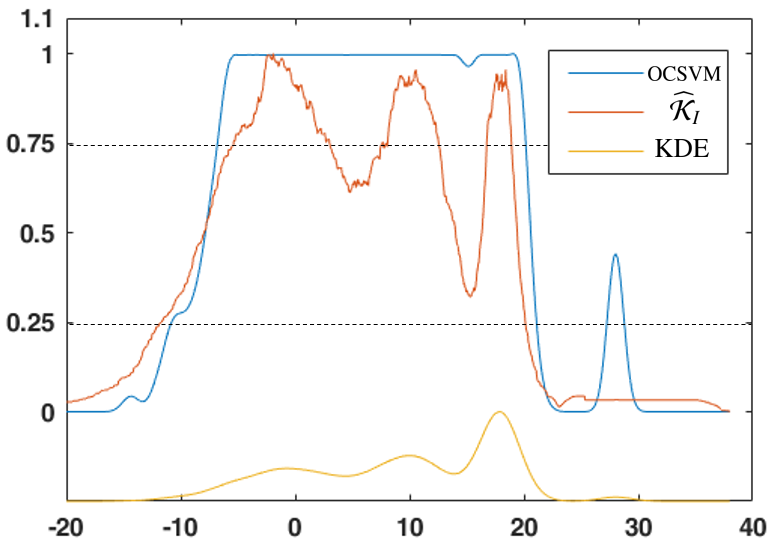}}
    \hspace{-2mm}
  \caption{An one-dimensional dataset having three normal clusters of Gaussian distributions and one anomaly cluster (the small cluster on the right). They have a total of 1500+20 points. The bottom line shows the density distribution as estimated by a kernel density estimator (KDE) using Gaussian kernel (its scale is not shown on the y-axis.) The distributions of scores/similarities of OCSVM and $\widehat{\mathcal{K}}_I$ (scales in y-axis) are shown. The scores of OCSVM have been inversed and rescaled to $[0,1]$ to be comparable to similarity.}
  \label{fig_local+clustered_anomalies}
\end{figure}

\subsection{Stability analysis}

This section provides an analysis to examine the stability of the scores of $\widehat{\mathcal{K}}_{I}$. Because $\widehat{\mathcal{K}}_{I}$ relies on random partitionings, it is important to determine how stable $\widehat{\mathcal{K}}_{I}$ is in different trials using different random seeds.
Figure \ref{boxplot} shows a boxplot of the scores produced by $\widehat{\mathcal{K}}_{I}$ over 10 trials. It shows that $\widehat{\mathcal{K}}_{I}$ becomes more stable as $t$ increases.

\begin{figure}
    \centering
    {\includegraphics[height=.22\textwidth,width=.30\textwidth]{./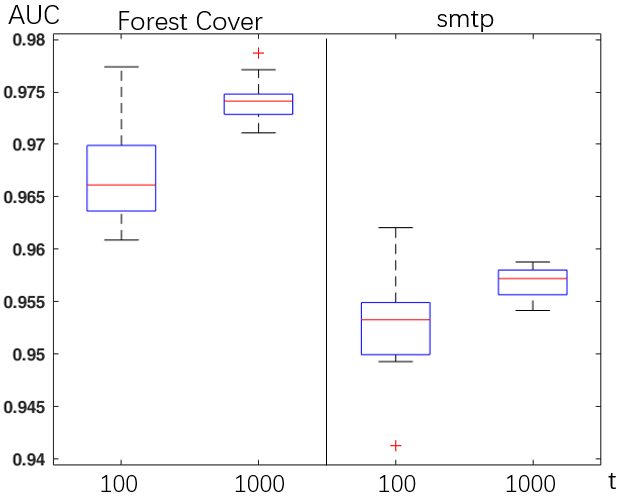}}
  \hspace{-2mm}
  \caption{Boxplot of 10 runs of $\widehat{\mathcal{K}}_{I}$ on ForestCover and smtp. 
  This result shows the increased stability of  $\widehat{\mathcal{K}}_{I}$ predictions as $t$ increases from 100 to 1000.}
\label{boxplot}
\hspace{-2mm}
\end{figure}

\subsection{Robust against contamination of anomalies in the training set}

This section examines how robust an anomaly detector is against contamination anomalies in the training set. We use a `cut-down' version of $\widehat{\mathcal{K}}_{I}$ which does not employ IDK, but Isolation Kernel only, i.e., $\Vert \Phi(x) \Vert$. %This version is almost equivalent to iNNE (see the next section for more details.)
$\Vert \Phi(x) \Vert$ is the base line in examining the robustness of $\widehat{\mathcal{K}}_{I}$ because $\Phi(x)$ is the basis in computing $\widehat{\mathcal{K}}_{I}$ (see Equation \ref{eqn_IDK}). 

Figures~\ref{cover_stable} 
%and \ref{http_stable} 
shows an example comparisons between $\Vert \Phi(x) \Vert$ and $\widehat{\mathcal{K}}_I$ on the ForestCover dataset, where exactly the same hyperspheres are used in both detectors. The results show that $\Vert \Phi(x) \Vert$ is very unstable in a dataset with high contamination of anomalies (high $\gamma$). Despite using exactly the same $\Vert \Phi(x) \Vert$, $\widehat{\mathcal{K}}_I$ is very stable for all $\gamma$'s. This is a result of the distributional characterisation of the entire dataset, described in Section \ref{sec_anomaly_detection}.

\begin{figure}
    \centering  
    {\includegraphics[height=.20\textwidth,width=.38\textwidth]{./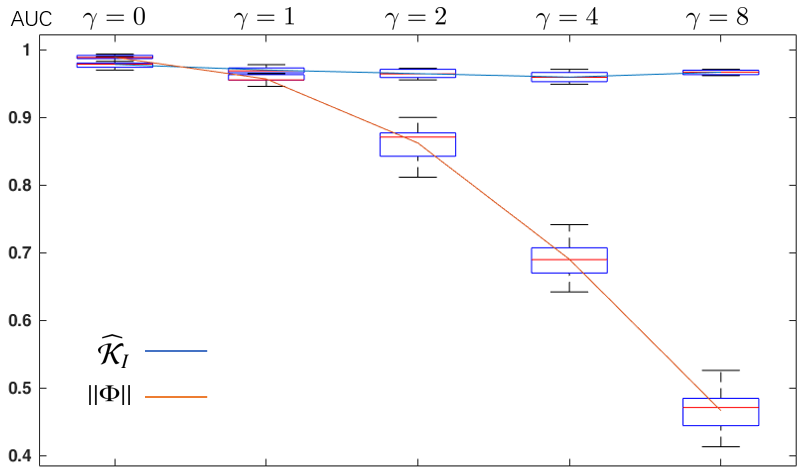}}
  \caption{Boxplot of 10 runs of each of $\widehat{\mathcal{K}}_{I}$ and $\Vert \Phi(x) \Vert$ on ForestCover.  The original anomaly ratio $r$ is the ratio of the number of anomalies and the number of normal points in the given dataset. $\gamma \times r$ is used in the experiment to increase/decrease the anomalies in the given dataset. $\gamma=1$ when the given dataset is used without modification; $\gamma=0$ when no anomalies are used in the training process. $\gamma>1$ has an increasingly higher chance of including anomalies in the training process. }
\label{cover_stable}
\end{figure}

% \begin{figure}
%     \centering
%     {\includegraphics[height=.20\textwidth,width=.40\textwidth]{./}}
%   \caption{Boxplot of 10 runs of each of $\widehat{\mathcal{K}}_{I}$ and $\Vert \Phi(x) \Vert$ on http. }
% \label{http_stable}
% \end{figure}

\section{Relation to Isolation-based anomaly detectors}
\label{sec_relation}

Both $\widehat{\mathcal{K}}_I$ and the existing isolation-based detector iNNE \cite{iNNE} employ the same partitioning mechanism. But the former is a distributional kernel and the latter is not.
iNNE employs a score which is a ratio of radii of two hyperspheres, designed to detect local anomalies \cite{iNNE}. The norm $\Vert \Phi(x) \Vert$ of Isolation Kernel, which is similar to the score of iNNE, simply counts the number of times $x$ falls outside of a set of all hyperspheres, out of $t$ sets of hyperspheres. This explains why both iNNE and  $\Vert \Phi(x) \Vert$ have similar AUCs for all the datasets
we used for point anomaly detection.

\begin{table}[t]
		\centering
		\caption{Results of isolation-based anomaly detectors (AUC). }
		\label{tbl_point_anomalies2}
		\begin{tabular}{lcccc}
			\toprule
			Dataset & $\widehat{\mathcal{K}}_I$ &$\Vert \Phi(x) \Vert$&iNNE&iForest  \\ \midrule
			speech &0.76&0.75&0.75&0.46\\
			EEG\_eye&0.88&0.87&0.87&0.58\\
			PenDigits&0.98&0.96&0.96&0.93\\
			MNIST\_230&0.98&0.97&0.97&0.88\\
			MNIST\_479&0.86&0.60&0.86&0.45\\
			mammograg &0.88&0.86&0.84&0.87\\
			electron  &0.80&0.78&0.79&0.80\\
			shuttle&0.98&0.98&0.98&0.99\\
			ALOI &0.82&0.82&0.82&0.55\\
			muon&0.82&0.81&0.82&0.74\\
			smtp&0.95&0.92&0.94&0.92\\
			IoT\_botnet&0.99&0.99&0.99&0.94\\
			ForestCover&0.97&0.96&0.96&0.93\\
			http&0.99&0.99&0.99&0.99\\
			\midrule
			{Average rank} &1.50 &2.75 & 2.43&3.32 
\\
			\bottomrule
		\end{tabular}
	\end{table}

In comparison with $\Vert \Phi(x) \Vert$ and iNNE, $\widehat{\mathcal{K}}_I$ has an additional distributional characterisation of the entire dataset.
%In point anomaly detection,  
Table~\ref{tbl_point_anomalies2} shows that $\widehat{\mathcal{K}}_I$'s score based on this characterization  leads to equal or better accuracy than the scores used by both $\Vert \Phi(x) \Vert$ and iNNE. This is because the characterisation provides an effective reference for point anomaly detection, robust to contamination of anomalies in a dataset. This robustness is important when using points in a dataset which contains anomalies to build a model (that consists of hyperspheres used in $\widehat{\mathcal{K}}_I$, $\Vert \Phi(x) \Vert$ and iNNE.)

%Since $\widehat{\mathcal{K}}_I$, iNNE \cite{iNNE} and iForest \cite{liu2008isolation} have same time complexity $O(nt\psi)$, they  have approximately the same runtime on the largest dataset http, as shown in Table~\ref{tbl_runtime-pointanomaly}.

In a nutshell, existing isolation-based anomaly detectors, i.e., iNNE and iForest, employ a score similar to the norm $\Vert \Phi(x) \Vert$. This is an interesting revelation because isolation-based anomaly detectors were never considered to be related to a kernel-based method before the current work. 
The power of isolation-based anomaly detectors can now be directly attributed to the norm of the feature map  $\Vert \Phi(x) \Vert$ of Isolation Kernel.

From another perspective, $\widehat{\mathcal{K}}_I$ can be regarded as a member of the family of Isolation-based anomaly detectors \cite{liu2008isolation, iNNE}; and iForest \cite{liu2008isolation} has been regarded as one of the state-of-the-art point anomaly detectors \cite{EmmottDDFW16,Aggarwal-Book2017}. iNNE \cite{iNNE}  is recently proposed to be an improvement of iForest.  \textbf{$\widehat{\mathcal{K}}_I$ is the only isolation-based anomaly detector that is also a kernel based anomaly detector.}

%The improvement of iNNE over iForest is mainly due to the use of a better isolating mechanism---overcoming four weaknesses of iForest \cite{iNNE}. The improvement of $\widehat{\mathcal{K}}_I$ over iNNE is mainly due to the distributional characterisation---it also contributes to $\widehat{\mathcal{K}}_I$ being the most robust isolation-based anomaly detector against contamination of anomalies in the training set. In addition, the proposed method is the only detector that makes use of distributional kernel for point anomaly detection. While existing detector OCSMM \cite{OCSMM2013} employs a distributional kernel, it is a group anomaly detector, not a point anomaly detector.

\section{Background on Group anomaly detection}
\label{sec_background_gad}
Group anomaly detection aims to detect group anomalies which exist in a dataset of groups of data points, where no labels are available, in an unsupervised learning context. 
%In contrast, most anomaly detectors (e.g., \cite{LOF-2000,liu2008isolation}) can detect point anomalies in a dataset of data points only. 
Group anomaly detection requires a means to determine whether two groups of data points are generated from the same distribution or from different distributions.

There are two main approaches to the problem of group anomaly detection, depending on the learning methodology used. The generative approach aims to learn a model which represents the normal groups that can be used to differentiate from group anomalies.  Flexible Genre Model \cite{Xiong:2011} is an example of this approach. On the other hand, the discriminative approach aims to learn the smallest hypersphere which covers most of the training groups, assuming that the majority are normal groups. The representative is a kernel-based method called OCSMM \cite{OCSMM2013} (which is an extension of kernel point anomaly detector OCSVM \cite{OCSVM2001}).

Instead of focusing on learning methodologies, we introduce a third approach that focuses on using IDK to measure the degree of difference between two distributions. We show that IDK is a powerful kernel which can be used directly to differentiate group anomalies from normal groups, without explicit learning.

The intuition is that once every group in input space is mapped a point in Hilbert Space by IDK, then the point anomalies in Hilbert space (equivalent to group anomalies in input space) can be detected by using the point anomaly detector $\widehat{\mathcal{K}}_I$ we described in the last three sections. Here $\widehat{\mathcal{K}}_I$ is to be built from the dataset of points in Hilbert Space. This two-level application of IDK  is proposed to detect group anomalies in the next section.

%Our motivation comes from the observation that existing approaches have high computational cost with at least $O(n^2m^2)$, where the dataset has $n$ groups of $m$ points. This high cost is a direct outcome of the use of point-to-point distance/kernel function, where each calculation of similarity between two groups of $m$ points costs $O(m^2)$. This has hampered their ability to deal with large scale datasets.

%Existing state-of-the-art group anomaly detector OCSMM \cite{OCSMM2013} employs kernel mean embedding \cite{HilbertSpaceEmbedding2007,KernelMeanEmbedding2017} as the tool to measure the similarity between two distributions, where each group in the given dataset is assumed to be generated from an unknown distribution. While OCSMM has performed reasonably well, it cannot be applied to large scale datasets. This is because it has high computational cost. The ability to deal with large scale datasets is essential in the age of big data.

%Here we show that IDK computes the similarity between two groups of $m$ points in $O(1)$ time, although it requires a preprocessing that costs $O(m)$ for each group. 

%As a result, the new kernel-based group anomaly detector we proposed using IDK runs orders of magnitude faster then existing methods such as OCSMM.

\section{Proposed Method for\\ Group Anomaly Detection}
\label{sec_problem_setting}

The problem setting of group anomaly detection assumes a given dataset $M = \cup_{j=1}^n \mathsf{G}_j$, where $\mathsf{G}_j$ is a group of data points in $\mathbb{R}^d$; and $M= A \cup N$, 
%= \{\mathsf{G}_j|j=1,\dots,n\}$, 
where $A$ is the union of anomalous groups $\mathsf{G}_j$, and $N=M \backslash A$ is the union of normal groups. 
We denote an anomalous group $\mathsf{G} \subset A$ as $\mathsf{A}$; and a normal group $\mathsf{G} \subset N$ as $\mathsf{N}$.  
The membership of each $\mathsf{G}$, {\it i.e.}, $\mathsf{G}=\mathsf{A}$ or $\mathsf{N}$, is  not known; and it is to be discovered by a group anomaly detector.  

\begin{definition}
\textbf{
A group anomaly $\mathsf{A}$ is a set of data points, where every point in $\mathsf{A}$ is generated from distribution $\mathcal{P}_\mathsf{A}$. It has the following characteristics in a given dataset $M$:
\begin{itemize}
    \item [i.] $\mathcal{P}_\mathsf{A}$ is distinct from the distribution $\mathcal{P}_\mathsf{N}$ from which every point in any normal group $\mathsf{N}$ is generated such that the similarity between the two distributions is small, i.e., $\widehat{\mathcal{K}}(\mathcal{P}_\mathsf{N},\mathcal{P}_\mathsf{A}) \ll 1$;
    \item[ii.] The number of group anomalies is significantly smaller than the number of normal groups in $M$, i.e., $|A| \ll |N|$.
\end{itemize}
}
\label{def_group_anomalies}
\end{definition}
Note that the normal groups in a dataset $M$ may be generated from multiple distributions. This does not affect the definition, as long as each normal distribution $\mathcal{P}_\mathsf{N}$ satisfies the two above characteristics.

\subsection{Groups in input space are represented as points in Hilbert space associated with IDK}
Different from the two existing existing learning-based approaches mentioned in Section \ref{sec_background_gad}, we take a simpler approach that employs IDK, without learning. Given $M = \cup_{j=1}^n \mathsf{G}_j$,  it enables {\em each group $\mathsf{G}$ in the input space $\mathbb{R}^d$ to be represented  as a point $\widehat{\Phi}(\mathcal{P}_\mathsf{G} | M)$  in  the Hilbert space $\mathscr{H}$ associated with IDK $\widehat{\mathcal{K}}_I(\cdot | M)$ and its feature map $\widehat{\Phi}(\cdot | M)$}.

Let  $\Pi = \{ \widehat{\Phi}(\mathcal{P}_{\mathsf{G}_j}|M)\ |\ j=1,\dots,n \}$ denote the set of all IDK-mapped points in $\mathscr{H}$; and $\delta$ be a Dirac measure  defined in $\mathscr{H}$  which turns a point into a distribution. 
%Here, \emph{each group anomaly in $\mathbb{R}^d$ is represented  as a point anomaly in  $\mathscr{H}$ associated with IDK}.

While our definition of group anomalies does not rely on a definition of point anomalies in input space, it does depend on a definition of point anomalies in Hilbert space.

\begin{definition}
\textbf{
Point anomalies $\mathbf{a} = \widehat{\Phi}(\mathcal{P}_\mathsf{A}|M)$ in $\mathscr{H}$ are points in $\Pi$  which have small similarities with respect to $\Pi$, i.e., $\widehat{\mathcal{K}}_I(\delta(\mathbf{a}),\mathbf{P}_\Pi\ |\ \Pi) \ll 1$.}
\end{definition}
Note that the similarity $\widehat{\mathcal{K}}_I$ is derived from $\Pi$ in this case in order to define point anomalies $\mathbf{a} \in \Pi$ in $\mathscr{H}$.

To simplify the notation, we use a shorthand (lhs) to denote the formal notation (rhs):
\[\widehat{\mathbf{K}}_I(\mathbf{a},\Pi) \equiv \widehat{\mathcal{K}}_I(
  \delta(\mathbf{a}),\mathbf{P}_\Pi | \Pi)\]

With the simplified notation, the similarity can be interpreted as one between a point and a set in Hilbert space.
 
\begin{definition}
\textbf{
Normal points  $\mathbf{n} = \widehat{\Phi}(\mathcal{P}_\mathsf{N}|M)$ in $\mathscr{H}$  are points in $\Pi$   which have large similarities with respect to $\Pi$ such that \[ \widehat{\mathbf{K}}_I(\mathbf{n},\Pi) > \widehat{\mathbf{K}}_I(\mathbf{a},\Pi) \]
%\[\widehat{\mathcal{K}}_I(\delta(\widehat{\Phi}(\mathcal{P}_N)),\mathbf{P}_\Pi) > \widehat{\mathcal{K}}_I(\delta(\widehat{\Phi}(\mathcal{P}_A)),\mathbf{P}_\Pi).\]
}
\end{definition}

In the last two definitions, all points $\widehat{\Phi}(\cdot|M)$ in $\Pi$ are mapped using level-1 IDK derived from points in $M$ in the input space; and  $\widehat{\mathcal{K}}_I(\cdot,\cdot\ |\ \Pi)$ refers to the level-2 IDK derived from the level-1 IDK mapped points in $\Pi$. Recall that IDK is derived directly from a dataset, as described in Section \ref{sec_IDK}.

% \subsection{Hard-to-detect group anomalies}
% Let $\mathsf{A}$ be group anomalies which are easy to detect in $D$; and $\mathsf{H}$ be those which are difficult to detect.

% \begin{definition} 
% \textbf{ Hard-to-detect group anomalies in input space are represented as point anomalies in Hilbert space $\mathbf{h} = \widehat{\Phi}(\mathcal{P}_\mathsf{H}|D)$ in $\Pi$ which have similarities in-between easy-to-detect point anomalies $\mathbf{a} = \widehat{\Phi}(\mathcal{P}_\mathsf{A}|D)$  and normal points $\mathbf{n}=\widehat{\Phi}(\mathcal{P}_\mathsf{N}|D)$ with respect to $\Pi$, i.e.,
% \[\widehat{\mathbf{K}}_I(\mathbf{n},\Pi) > \widehat{\mathbf{K}}_I(\mathbf{h},\Pi) > \widehat{\mathbf{K}}_I(\mathbf{a},\Pi) 
% \]%\[\widehat{\mathcal{K}}_I(\delta(\widehat{\Phi}(\mathcal{P}_N)),\mathbf{P}_\Pi) > \widehat{\mathcal{K}}_I(\delta(\widehat{\Phi}(\mathcal{P}_L)),\mathbf{P}_\Pi) > \widehat{\mathcal{K}}_I(\delta(\widehat{\Phi}(\mathcal{P}_G)),\mathbf{P}_\Pi).\]
% }
% \end{definition}

% These anomalies in Hilbert space are harder to detect because some can potentially be mistaken as normal points as they both may have approximately the same similarities.

% Note that a group anomaly in the input space may not be mapped to a point anomaly in the Hilbert space associated with GDK. This is because GDK is based on Gaussian kernel where the data independency limits its ability, as well as OCSMM's ability, to deal with hard-to-detect group anomalies (see the empirical results in Table \ref{tbl_group_anomalies} and Section \ref{sec_compare_with_OCSMM}  for details.)

\subsection{IDK$^2$: No-learning IDK-based approach}
\label{sec_IDK2}

The above definitions of group anomalies lead directly to the proposed group anomaly detector called IDK$^2$
because of the use of two levels of IDK.

Though using the same kernel mean embedding, the proposed approach is a simpler but more effective alternative to that used in OCSMM \cite{OCSMM2013}.

Given the proposed distributional kernel IDK, %$\widehat{\mathcal{K}}_I(\mathcal{P}_S,\mathcal{P}_T)$, 
the proposed group anomaly detector is simply applying IDK at two levels to a dataset consisting of groups $\mathsf{G}$: $M = \cup_{j=1}^n \mathsf{G}_j$:

\begin{enumerate}
  \item The level-1 IDK $\widehat{\Phi}(\cdot|M): \mathbb{R}^d \rightarrow \mathscr{H}$ maps  each  group of points ${\mathsf{G}_j}$ in the input space to a point  $\mathbf{g}_j = \widehat{\Phi}(\mathcal{P}_{\mathsf{G}_j}|M)$ in $\mathscr{H}$ to produce $\Pi$,  a set of points in $\mathscr{H}$, as follows: 
\[
%M = \{ \mathsf{G}_j\ |\ j=1,\dots,n\} \rightarrow 
\Pi = \{ %\widehat{\Phi}(\mathcal{P}_{\mathsf{G}_j})
\mathbf{g}_j\ |\ j=1,\dots,n\}\]

  \item The level-2 IDK $\widehat{\Phi}_2(\cdot|\Pi):  \mathscr{H} \rightarrow \mathscr{H}_2$  maps (i) $\mathbf{g}_j$ to $\widehat{\Phi}_2(\delta(\mathbf{g}_j)|\Pi)$; and (ii) $\Pi$ to $\widehat{\Phi}_2(\mathbf{P}_\Pi|\Pi)$. Then the IDK $\widehat{\mathcal{K}}_I(\delta(\mathbf{g}_j),\mathbf{P}_\Pi|\Pi)$  computes their similarity:
  \[\widehat{\mathbf{K}}_I(\mathbf{g}_j,\Pi) \equiv 
  %\widehat{\mathcal{K}}_I(\delta(\mathbf{g}_j),\mathbf{P}_\Pi|\Pi).
\frac{1}{t} \left<\Phi_2(\mathbf{g}_j),\widehat{\Phi}_2(\mathbf{P}_\Pi))\right>\]
where $\Phi_2(\mathbf{g}_j)=\widehat{\Phi}_2(\delta(\mathbf{g}_j))$.)

\end{enumerate}

\begin{figure*}[t]
  \vspace{-2mm}
%{\includegraphics[height=.18\textwidth,width=.35\textwidth]{./}}
\centering
\subfloat[Two levels of IDK mappings $\widehat{\Phi}$ \& $\widehat{\Phi}_2$ (and IK mapping ${\Phi}_2$ is associated with level-2 IDK)]{
\includegraphics[width=.7\textwidth]{./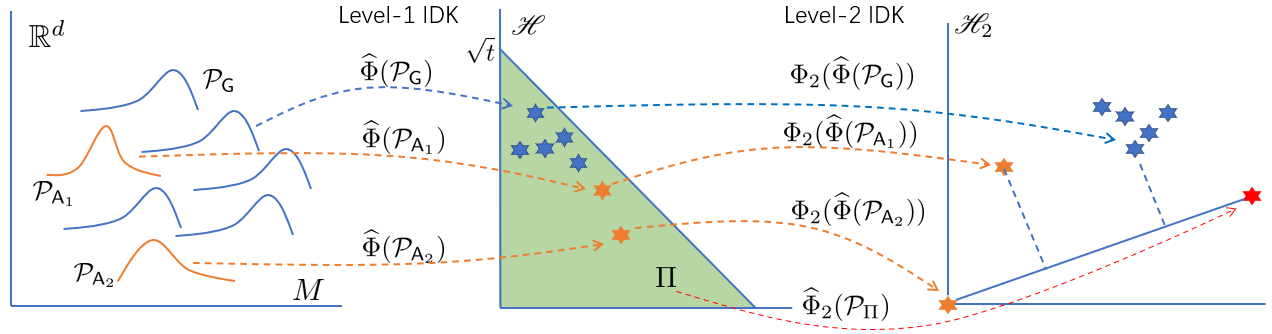} 
  \label{mapping_IDK}
}\\
\subfloat[Two levels of GDK mappings $\widehat{\varphi}$ \& $\widehat{\varphi}_2$ (and  GK mapping ${\varphi}_2$ is associated with level-2 GDK)]{
\includegraphics[width=.7\textwidth]{./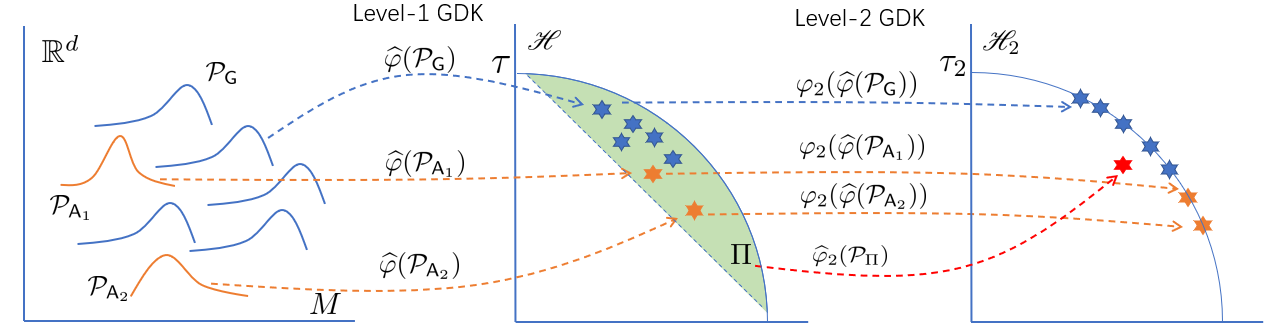} 
  \label{mapping_GDK}
}
\caption{The geometric interpretation of GDK and IDK mappings. (a) The projection of each point (e.g., ${\Phi}_2(\widehat{\Phi}(\mathcal{P}_G))$) in $\mathscr{H}_2$ onto the line from the origin to $\widehat{\Phi}_2(\mathcal{P}_\Pi)$ (in the right subfigure) is the similarity obtained from the dot product. This example shows that the level-2 mapping of $\mathsf{G}$ is more similar to that of $\Pi$ than that of $\mathsf{A}_1$ or $\mathsf{A}_2$. As a result, each normal group $\mathsf{G}$ is readily differentiated from group anomalies $\mathsf{A}_1$ or $\mathsf{A}_2$.  (b) This example shows that the level-2 GDK mapping  $\widehat{\varphi}_2(\mathcal{P}_\Pi)$ (in the right subfigure) is similar to the individual level-2 mappings of both the group anomalies $\mathsf{A}$ and some normal groups $\mathsf{G}$. Therefore, it is hard to differentiate the group anomalies from these normal groups.}
  \label{mapping}
  \vspace{-2mm}
\end{figure*}

The number of group anomalies is significantly smaller than the number of normal groups.
Thus, we assume that the distribution of the set of level-1 IDK-mapped points of normal groups $\mathbf{N}=\{ \mathbf{g}= \widehat{\Phi}(\mathcal{P}_{\mathsf{G}}|M) |\mathsf{G} \subset N\}$ is almost identical to the distribution of all level-1 IDK-mapped points $\Pi$, i.e., $\mathbf{P}_\mathbf{N} \approx \mathbf{P}_{\Pi}.$

Given this assumption, we use the similarity $\widehat{\mathbf{K}}_I(\mathbf{g},\Pi)$ to rank $\mathbf{g} \in \Pi$, i.e., {\em if the similarity is large,  then $\mathsf{G}$ is likely to be a normal group; otherwise it is an group anomaly}.

This procedure called IDK$^2$ is shown in Algorithm \ref{alg_IDK2}. Line~1 denotes the level-1 mapping $\widehat{\Phi}$; Line 3 denotes the level-2 mapping $\widehat{\Phi}_2$ (and its base mapping ${\Phi_2}$); and the similarity calculation of each $\mathsf{G}_j$ wrt $M$ after the two levels of mappings.

\begin{algorithm}[ht]
		\caption{IDK$^2$ algorithm }
		\label{alg_IDK2}
		\begin{algorithmic}[1]
 			\Require Dataset $M= \cup_{j=1}^n \mathsf{G}_j$; sample sizes $\psi$ of $M$ \& $\psi_2$ of $\Pi$ for two levels of IK mappings ${\Phi}$ \& ${\Phi}_2$, respectively.
 			\Ensure Group similarity score $\alpha_j$ for each group $\mathsf{G}_j \subset M$.
\State For each $j = 1,\dots,n,\  \mathbf{g}_j = \widehat{\Phi}(\mathcal{P}_{\mathsf{G}_j})$
\State $\Pi = \{ \mathbf{g}_j\ | \ j = 1,\dots,n\}$
\State For each $j = 1,\dots,n,\ \alpha_j = \frac{1}{t} \left<\Phi_2(\mathbf{g}_j),\widehat{\Phi}_2(\mathbf{P}_\Pi))\right>$
\State Sort $ \mathsf{G}_j \subset M$ in ascending order by $\alpha_j$
	\end{algorithmic}  
\end{algorithm}

Our use of $\widehat{\mathcal{K}}_I$ is a departure from the conventional use of (i) Distributional Kernel which measures similarity between two distributions; or (ii) point kernel which measures similarity between two points. We use $\widehat{\mathcal{K}}_I$ to measure between, effectively, a point and a set generated from an unknown distribution.
The use of a Distributional Kernel in this way is new, so as its interpretation, as far as we know.
Figure \ref{mapping_IDK} illustrates the two levels of IDK mapping from a distribution (generating a group) in the input space to a point in Hilbert space $\mathscr{H}$ where the set of all mapped points is $\Pi$. Level-2 mapping shows the similarity (as a dot product) of each point $\mathbf{g}_j \in \Pi$ wrt  $\Pi$ in the second level Hilbert space $\mathscr{H}_2$, i.e., $\widehat{\mathbf{K}}_I(\mathbf{g}_j,\Pi) =  \frac{1}{t} \left<\Phi_2(\mathbf{g}_j),\widehat{\Phi}_2(\mathbf{P}_\Pi))\right>$, as shown in Figure~\ref{mapping_IDK} in the right subfigure.

IDK$^2$ needs no explicit learning  to successfully rank group anomalies ahead of normal groups. Yet, the corresponding GDK$^2$ using the conventional GDK \cite{KernelMeanEmbedding2017} (that employs Gaussian kernel in Eq (\ref{eqn_mmk})) fails to achieve this. We explain the reason in the next section.

\section{Conceptual comparison and\\ Geometric interpretation}
\label{sec_conceptual_comparison}
Table \ref{tbl_IDK_vs_GDK} provides a summary comparison between Isolation Kernel (IK) and Gaussian kernel (GK), and their two levels of distributional kernels.

\setlength{\arrayrulewidth}{0.5mm}	
\begin{table}[t]
\vspace{-3mm}
	%[!htbp]
		\centering
		\caption{\hspace{1cm}IK/IDK/IDK$^2$ 
		versus GK/GDK/GDK$^2$. \hspace{5cm} $x \sim \mathcal{P}$; $\mathbf{g} \sim \mathbf{P}_\Pi$; $\Pi$ is a set of points ($\mathbf{g} = \widehat{\Phi}(\mathcal{P})$ or $\widehat{\varphi}(\mathcal{P})$).
		}
		\label{tbl_IDK_vs_GDK}
		\vspace{-3mm}
		\begin{tabular}{@{}l@{}l@{\ }|@{\ }l@{}l@{}}
			\toprule
 IK:  &  $0 \leq \Vert {\Phi}(x) \Vert\ \leq \sqrt{t}$ & GK: & $\Vert \varphi(x) \Vert\ = \tau$ (constant)\\
 IDK: &  $0 \leq \Vert \widehat{\Phi}(\mathcal{P}) \Vert\ \leq \sqrt{t}$ & GDK: & $\Vert \widehat{\varphi}(\mathcal{P}) \Vert\ \leq \tau$\\
 IDK$^2$: & $0 \leq \widehat{\mathbf{K}}_I(\mathbf{g},\Pi) \leq \Vert \widehat{\Phi}_2(\mathbf{P}_\Pi) \Vert$ & GDK$^2$: & 
%$\min(Conv(\{\widehat{\varphi}_2(\delta(\mathbf{g}))\})) <  
$\Vert \widehat{\varphi}_2(\mathbf{P}_\Pi) \Vert\ \leq \Vert {\varphi}_2(\mathbf{g}) \Vert = \tau_2$\\
%Learning & & & OCSMM: & optimize $\{\widehat{\varphi}(\mathcal{P}\}$ with prior $\nu$\\
	\bottomrule
		\end{tabular}
\vspace{-3mm}
\end{table}

\noindent
\textbf{GK vs IK}: Using Gaussian kernel (or any translation-invariant kernel, where its kernel depends on $x-x'$ only), $\kappa(x,x)$ is constant. Thus, $\Vert \varphi(x) \Vert = \tau$ (constant), and $\varphi(x)$ maps each point $x \in \mathbb{R}^d$ to a point on the sphere in $\mathscr{H}$ \cite{OCSMM2013}. $\Vert \varphi(x) \Vert = \tau$ also implies that GK is data independent. In contrast, IK is data dependent. $\kappa_I(x,x)$ is not constant; and its feature map has norm ranges:  $0 \leq \Vert {\Phi}(x) \Vert\ \leq \sqrt{t}$, where $t$ is a user setting of IK which could be interpreted as the number of effective dimensions of the feature map (see Section \ref{sec_IsolationKernel} for details.)

\noindent
\textbf{GDK \& OCSMM}:  GDK produces the kernel mean embedding $\widehat{\varphi}(\mathcal{P})$ that maps any group of points (generated from $\mathcal{P}$) into a region bounded by a plane (perpendicular to a vector from the origin of the sphere) and the sphere in $\mathscr{H}$ \cite{OCSMM2013}. 
    The desired distribution in $\mathscr{H}$ is: group anomalies are mapped by GDK away from the sphere, and normal groups are mapped close to the sphere. Then, a boundary plane can be used to exclude group anomalies from normal groups. See the geometric interpretation of level-1 mapping shown in Figure~\ref{mapping_GDK} (the middle subfigure), where the plane is shown as a line segment inside the sphere. 
    
    OCSMM, which replies on the GDK mapping, applies a learning process in $\mathscr{H}$ to determine the boundary  (equivalent to adjusting the plane mentioned above) based on a prior $\nu$ which denotes the proportion of anomalies in the training set. The boundary produces a `small' region (shaded in Figure~\ref{mapping_GDK} (the middle subfigure)) that captures most of the (normal) groups in the training set. A higher $\nu$ reduces the region so that more points in $\mathscr{H}$ (representing the group anomalies) in the training set can be excluded from the region.
    However, as we will see in the experiments later that there is no guarantee that GDK will map to the desired distribution in $\mathscr{H}$.
    
    \noindent
    \textbf{IDK \& IDK$^2$}: IDK has the distribution of points $\widehat{\Phi}(\mathcal{P})$ in $\mathscr{H}$ such that $0 \leq \Vert \widehat{\Phi}(\mathcal{P}) \Vert\ \leq \sqrt{t}$. See the geometric interpretation of level-1 mapping shown in Figure \ref{mapping_IDK} (the middle subfigure). Note that IDK is only required to map normal groups into neighbouring points in $\mathscr{H}$; and group anomalies are mapped to points away from the normal points in $\mathscr{H}$. The orientation of the distribution is immaterial, unlike GDK.  
    As long as level-1 IDK maps groups in input space to points having the above stated distribution in $\mathscr{H}$, the level-2 IDK will map group anomalies close to the origin of $\mathscr{H}_2$, and normal groups away from the origin of $\mathscr{H}_2$. Figure \ref{mapping_IDK} (the right subfigure) shows its geometric interpretation in $\mathscr{H}_2$. (This is because anomalies are likely to fall outside of the balls used to defined $\Phi_2$. See Section \ref{sec_IsolationKernel} for details.) As a result, no learning is required  in $\mathscr{H}_2$; and a simple similarity measurement using level-2 IDK will provide the required ranking to separate anomalies from normal points in $\mathscr{H}_2$. It computes  $\widehat{\mathbf{K}}_I(\mathbf{g},\Pi)$ between each level-1 mapped point $\mathbf{g}$ and $\Pi$ (the entire set of level-1 mapped points) via a dot product in $\mathscr{H}_2$. 
    
\noindent    
\textbf{GDK$^2$}: Figure \ref{mapping_GDK} (the right subfigure) shows the level-2 mapping of GDK. The individual groups are mapped on the sphere in $\mathscr{H}_2$; and $\Pi$ is mapped into a point inside the sphere. As a result, the similarity $\widehat{\mathbf{K}}(\mathbf{g},\Pi)$ could not be used to provide a proper ranking to separate anomaly groups from normal groups. This is apparent even when IDK is used at level 1 and GDK at level 2 (see the results of IDK-GDK in Table \ref{tbl_group_anomalies} in Section \ref{sec_experiments_gad}.)

\vspace{1mm}
The source of the power of IDK$^2$ comes from a special implementation of Isolation Kernel (IK) which maps anomalies close to the origin in Hilbert space. Also, its similarity is data dependent, i.e., two points
in a sparse region are more similar than two points
of equal inter-point distance in a dense region \cite{ting2018IsolationKernel,IsolationKernel-AAAI2019}. By the same token, GDK$^2$ is unable to use the similarity directly to identify group anomalies because it uses Gaussian kernel (GK) which is data independent.  

\section{Time complexities}
\label{sec_time_complexities}
We compare the time complexities of IDK, IDK$^2$, GDK, GDK$^2$ and OCSMM in this section. GDK$^2$ is our creation based on IDK$^2$, where the only difference is the use of Gaussian kernel instead of Isolation Kernel.
Their time complexities are summarised in Table \ref{tbl_time_complexity}.

The IDK preprocessing of $\widehat{\Phi}(\mathcal{P}_\mathsf{G})$ for $n$ groups $\mathsf{G}$ of $m$ points takes $O(nmt\psi)$ and needs to be completed once only. To compute the dot product of all pairs of level-1 mapped points in Hilbert space, IDK takes $O(n^2t\psi)$.

The dot product in the last step can be avoided when the level-2 IDK is used. This is because IDK$^2$ computes the similarity between each of the $n$ level-1 mapped point and the set of all level-1 mapped points in Hilbert space. This similarity is equivalent to the similarity between each of the $n$ groups and the set of all groups in input space.

\begin{table}[t]
	%[!htbp]
	\centering
	\caption{Time complexities.}
	  \vspace{-1mm}
	\label{tbl_time_complexity}
	\begin{tabular}{ccccc}
	\toprule
	 IDK & IDK$^2$ & GDK & GDK$^2$ (Nystr\"{o}m) & OCSMM \\ 
%	  & & & (Nystr\"{o}m accelerated) \\ 
\midrule
	 $nt\psi(m+n)$ & $nmt\psi$ & $n^2m^2$ & $nms^2$ & $n^2m^2$\\
	\bottomrule
	\end{tabular}
	  \vspace{-2mm}
\end{table}

		\begin{table*}[t]
		\centering
		\caption{A comparison of kernel based detectors for group anomalies (AUC).  \#Gr: \#Groups \& \#GA: \#Group anomalies.}
		%The average rank of an algorithm is computed as follows:  For example, algorithm A is ranked 2 if it has the second best AUC score on a dataset, among all detectors. If two algorithms have the same second best AUC, they are assigned the average rank between them, i.e., 2.5 for both algorithms. After all algorithms are ranked on all datasets, the average rank for each algorithm is computed and it is shown in the last row in the table.}
		%N/A indicates that no result could be obtained within 24 hours.}
		\label{tbl_group_anomalies}  
		\begin{tabular}{lrrrr|cccccc}
			\toprule
			Dataset & \#Points &\#Gr & \#GA & \#Attr
			& IDK$^2$&GDK$^2$ & IDK-GDK & IDK-OCSVM &OCSMM&iNNE\\ \midrule
            SDSS$_{0\%}$ &7,530&505&10&500&0.99&0.56&0.99 & 0.73&0.97&0.63\\
			SDSS$_{50\%}$&7,530&505&10&500&0.99&0.55&0.98& 0.69&0.89&0.63\\	MNIST\_4&26,739&2,971&50&324&0.99&0.56&0.45& 0.60&0.78&0.54\\
			MNIST\_4$_{50\%}$&26,739&2,971&50&324&0.92&0.55&0.47& 0.59&0.72&0.50\\
			MNIST\_8&26,775&2,975&50&324&0.97&0.54&0.54& 0.71&0.81&0.55\\
			MNIST\_8$_{50\%}$&26,775&2,975&50&324&0.82&0.52&0.52& 0.60&0.75&0.57\\
			
			covtype&201,000 &2,010&10&54&0.73&0.74&0.42&0.58&0.62&0.48\\
			Speaker&119,246&4,505&5&20&0.69&0.51&0.59& 0.53&0.67&0.42\\
			Gaussian100&300,000&3,000&30&2&0.97&0.91&0.65&0.73&0.95&0.67\\
			Gaussian1K&1,040,000&1,040&40&2&1.00&0.76&0.38&0.79& $> 24$ hours&0.70\\
		
		   MixGaussian1K&1,040,000&1,040&40&2&0.99&0.67&0.73&0.74& $> 24$ hours&0.64\\
			
			\midrule
			\multicolumn{5}{r}{Average rank :}
			&1.14&4.27&4.50&3.27&3.00&4.82\\
			\bottomrule
		\end{tabular}
	\end{table*}

The level-2 kernel mean map of $n$ level-1 mapped points takes $O(nt\psi_2)$; the level-2 dot product for each of $n$ level-1 mapped points takes $O(nt\psi_2)$. Thus, the total time complexity for two levels of mappings and the level-2 dot product is $O(nmt\psi)$ (assume $\psi_2 = \psi$.)

GDK takes $O(n^2m^2)$.
However, like IDK$^2$, GDK$^2$ can also make use of a finite-dimensional feature map to speed its runtime.
The Nystr\"{o}m  method \cite{Nystrom_NIPS2000} can be used to produce an approximate finite-dimensional feature map of Gaussian kernel. It uses a subset of ``landmark'' data points instead of all the data points. Given $s$ landmark points, the kernel matrix in factored form takes $O(nms)$ kernel evaluations and $O(s^3)$ additional time. 
The landmark points, selected by the recursive sampling algorithm~\cite{musco2017recursive}, takes an extra $O(nms^2)$. 
The total time complexity for GDK$^2$, accelerated by the Nystr\"{o}m  method,  is $O(nms^2+s^3)$. Since $s \ll nm$, the complexity is thus $O(nms^2)$ (assume $s = s_2$ for two levels of mappings.)

This shows that \emph{the use of a distributional kernel, which has a finite-dimensional feature map to compute the similarity between (effectively) a point and a dataset in Hilbert space, is the key in producing a linear time kernel based algorithm} in both IDK$^2$ and GDK$^2$. Otherwise, they could cost $O(n^2m^2)$.

\section{Experiments: Group Anomaly Detection}
\label{sec_experiments_gad}
We compare 
 IDK$^2$, GDK$^2$ and IDK-GDK  with 
 a state-of-the-art OCSMM \cite{OCSMM2013} as well as  IDK-OCSVM. The prefix `IDK-' indicates that the level-1 IDK mapped points are produced as input to GDK or OCSVM which is used as a representative kernel based point anomaly detector.
OCSMM employs kernel mean embedding of Eq \ref{eqn_mmk} where $\kappa$ is Gaussian kernel; and IDK-OCSVM employs linear kernel---it is equivalent to OCSMM using IDK. A recent point anomaly detector iNNE \cite{iNNE} is included as a strawman to demonstrate that the group anomalies in each dataset cannot be  identified effectively by using a point anomaly detector. iNNE is chosen because isolation-based methods have been shown to be a state-of-the-art point anomaly detector \cite{EmmottDDFW16,Aggarwal-Book2017,iNNE}.

%We report the detection accuracy of a method in terms of AUC by comparing the ranked groups in a dataset with their ground truths. In practice, groups more than a threshold or the top $k$ groups in the ranked list can be selected as the group anomalies.  
%All codes are written in Matlab running on Matlab 2018a. The experiments are performed on a machine with 4$\times$2.60 GHz CPUs and 8GB main memory.
%Parameter settings used in the experiments: IDK$^2$ and iNNE use $t=100$; and $\psi$ is searched over $\psi\in\{2^m\ |\ m=1,2...12\}$.  For all methods using Gaussian kernel, the bandwidth is searched over $\{2^m\ |\ m=-5,\dots 5\}$. 
%The sample size $s$ of the Nystr\"{o}m method is set to $\sqrt{n}$ which is also equal to the number of final features used in GDK$^2$. All datasets\footnote{The sources of the datasets used are:\\sdss3.org, odds.cs.stonybrook.edu, sites.google.com/site/gspangsite,\\csie.ntu.edu.tw/$\sim$cjlin/libsvmtools/datasets/,\\ lapad-web.icmc.usp.br/repositories/outlier-evaluation/{DAMI}.} are normalized to $[0,1]$.

The experimental settings\footnote{The sources of the datasets used are:\\sdss3.org, odds.cs.stonybrook.edu, sites.google.com/site/gspangsite,\\csie.ntu.edu.tw/$\sim$cjlin/libsvmtools/datasets/,\\ lapad-web.icmc.usp.br/repositories/outlier-evaluation/{DAMI}.} are the same as stated in Section \ref{sec_experiments}.

\subsection{Detection accuracy}

Table \ref{tbl_group_anomalies} shows that IDK$^2$ is the most effective group anomaly detector. The strawman iNNE point anomaly detector performed the worst.
The closest contender is OCSMM; but it performed worse than IDK$^2$ in all datasets. As shown in Figure~\ref{friedman_GAD}, the Friedman-Nemenyi test \cite{NemenyiTest-2006} 
shows that IDK$^2$  is significantly better than OCSMM at significance level 0.05.
OCSMM can be viewed as an improvement over GDK that needs an additional learning process, as illustrated in Figure~\ref{mapping_GDK}.
In contrast, IDK$^2$, which employs a data dependent kernel, does not need learning; yet it performed significantly better than OCSMM.

\begin{figure}[t]
\centering
%  \vspace{-2mm}
    \includegraphics[height=.08\textwidth,width=.28\textwidth]{./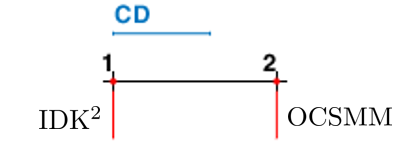}
%    \hspace{15mm}
  \caption{Friedman-Nemenyi test for group anomaly detectors. }
\label{friedman_GAD}
\end{figure}

GDK$^2$ is equivalent to IDK$^2$, but it employs a data independent kernel. The huge difference in detection accuracies between these two methods is startling---showing the importance of data dependency. 
The result of IDK-OCSVM (i.e., OCSMM using IDK) shows that the learning procedure used in OCSVM or OCSMM could not be used for IDK because of the differences in mappings shown in Figure \ref{mapping}. The procedure seeks to optimize the hyperplane excluding a few groups in $\mathscr{H}$. This does not work with the level-1 IDK mapping shown in Figure \ref{mapping_IDK}. 
Note that OCSMM could not complete each of the two datasets having one million data points within 24 hours. Yet, IDK$^2$ completed them in 1 hour.

\subsection{IDK$^2$ versus OCSMM}
\label{sec_compare_with_OCSMM}
Here we perform a more thorough investigation in determining the condition under which one is better than the other between IDK$^2$ and its closest contender OCSMM. We conduct two experiments using artificial datasets.
%examine the abilities of IDK$^2$ and OCSMM in detecting local group anomalies. 

\noindent
\textbf{First experiment}:
We use an one-dimensional dataset of a total of 1500 groups, where each group of 100 points is generated from a Gaussian distribution of a different $\mu$ in $[-15,25]$. The groups are clustered into three density peaks with the highest at $\mu=18$, the second highest at $\mu=8$ and the third highest at $\mu=-2$. By using the center of each Gaussian distribution as the representative point for each group, we estimate the density of the dataset of 1500 representative points using KDE: Kernel Density Estimator (with Gaussian kernel).

We train IDK$^2$ and OCSMM with this dataset; and report their similarity/score for each group. 
Figure \ref{fig_local_anomalies_test} shows the result of the comparison. The distribution of IDK$^2$ mirrors the three peaks of the density distribution of the dataset. If we regard hard-to-detect group anomalies as points having similarity between 0.25 and 0.75 in Figure \ref{fig_local_anomalies_test}, then IDK$^2$ identifies all these group anomalies surrounding all three clusters. Easy-to-detect group anomalies are those points having similarities $< 0.25$ at the fringes; and normal groups are those $> 0.75$ surrounding each of the three peaks. Note that the cut-offs of 0.25 and 0.75 are arbitrary for the purpose of demonstration only.

\begin{figure}[t]
\centering
\includegraphics[width=.33\textwidth]{./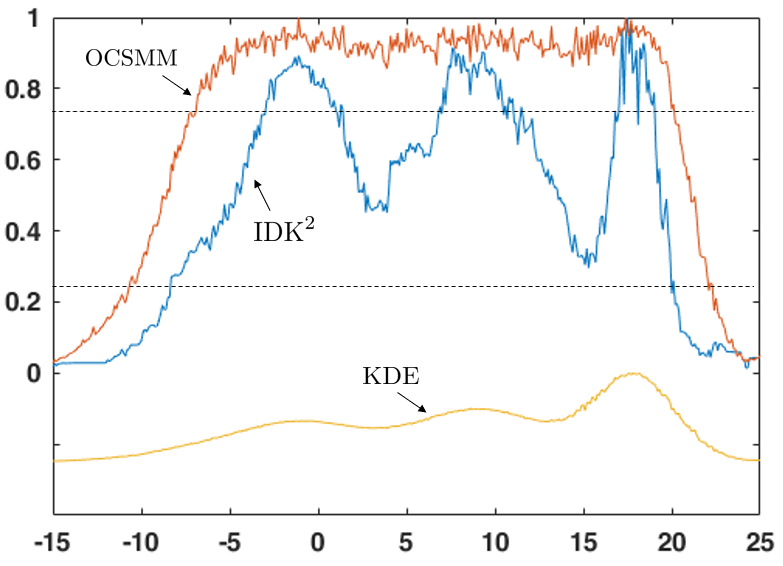}
\caption{Distributions of similarities of IDK$^2$ and OCSMM on an one-dimensional dataset which is generated by three Gaussian distributions. The y-axis shows the similarity of IDK$^2$ or the score of OCSMM. The dataset's density distribution is estimated using KDE (scale not shown). }
\label{fig_local_anomalies_test}
\end{figure}

In contrast, OCSMM fails to detect many hard-to-detect group anomalies, especially those in-between the three peaks, because none of the three peaks can be identified from the output distribution of OCSMM. It can only identify easy-to-detect group anomalies located at the fringes.  

\begin{figure}[!t]
\vspace{-2mm}
%\centering
\subfloat[IDK$^2$]{
\label{subfig_IDK}
\includegraphics[width=.235\textwidth]{./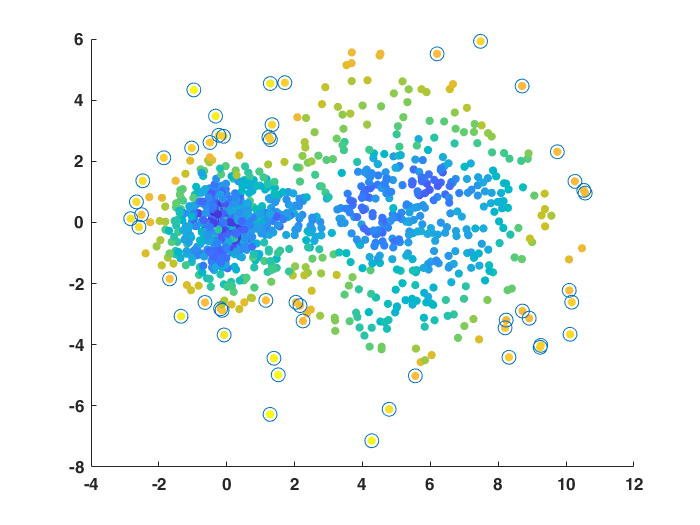}}
\subfloat[OCSMM]{
\label{subfig_smm}
\includegraphics[width=.235\textwidth]{./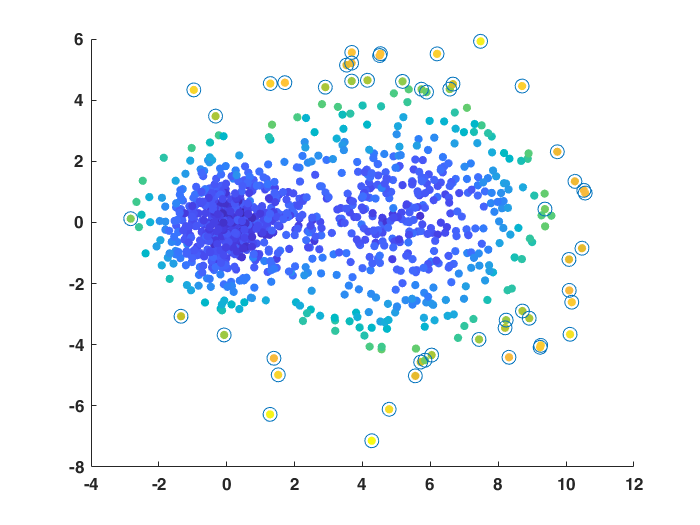}}
\caption{
Top 50 group anomalies (indicated as circles) as identified by IDK$^2$ and OCSMM on a dataset, where each point in the plot is a representative point of a group of 100 data points generated from a Gaussian distribution, each has a different mean but having the same variance.
The color indicates the level of similarity/score.}
%as measured by IDK$^2$ and $\mbox{OCSMM}$. }
\label{fig_two-clusters}
\end{figure}
%\vspace{3mm}
% \begin{figure}[!t]
% \centering
% \includegraphics[width=.29\textwidth]{./}
% \caption{The rank of the leftmost point in Figure \ref{fig_two-clusters} identified  by IDK$^2$ and OCSMM as the distance between the cluster centers increases. Rank 1 refers to the top ranked anomaly. }
% \label{fig_two-clusters-ranks}
% %\vspace{2mm}
% \end{figure}

\noindent
\textbf{Second experiment}: We use a two-dimensional dataset having two clusters of groups of points as shown in Figure \ref{fig_two-clusters}, where one cluster is denser than the other. 

The result in Figure \ref{fig_two-clusters}  shows that IDK$^2$ has a balance of anomalies identified from both clusters in the top 50 group anomalies. In contrast, the majority of anomalies identified by OCSMM came from the sparse cluster, and it missed out many anomalies associated with the dense cluster.

% We conduct a followup experiment to examine the impact of data distribution when the distance between the two clusters increases.
% Figure \ref{fig_two-clusters-ranks} shows the rank of the leftmost point showed in Figure \ref{fig_two-clusters}, as the distance between the two centers of the two clusters increases. This indicates that OCSMM ranks the left most point lower as the distance increases, that is, the top ranked anomalies will consist of anomalies from the sparse clusters only as the two clusters are well separated.

% On the other hand, IDK$^2$ maintains the rank of the leftmost point approximately the same as the distance increases. It indicates that a balance of anomalies from the two clusters is maintained in the top ranked anomalies, irrespective of the distance separation between the two clusters. 

In a nutshell, both experiments show that (i) OCSMM fails to identify hard-to-detect group anomalies  in scenarios where there are clusters of normal groups with varied densities; and (ii) IDK$^2$ can detect both hard-to-detect and easy-to-detect group anomalies more effectively.

\begin{figure}[!t]
\centering
\subfloat[Scaleup test]{\includegraphics[width=.23\textwidth]{./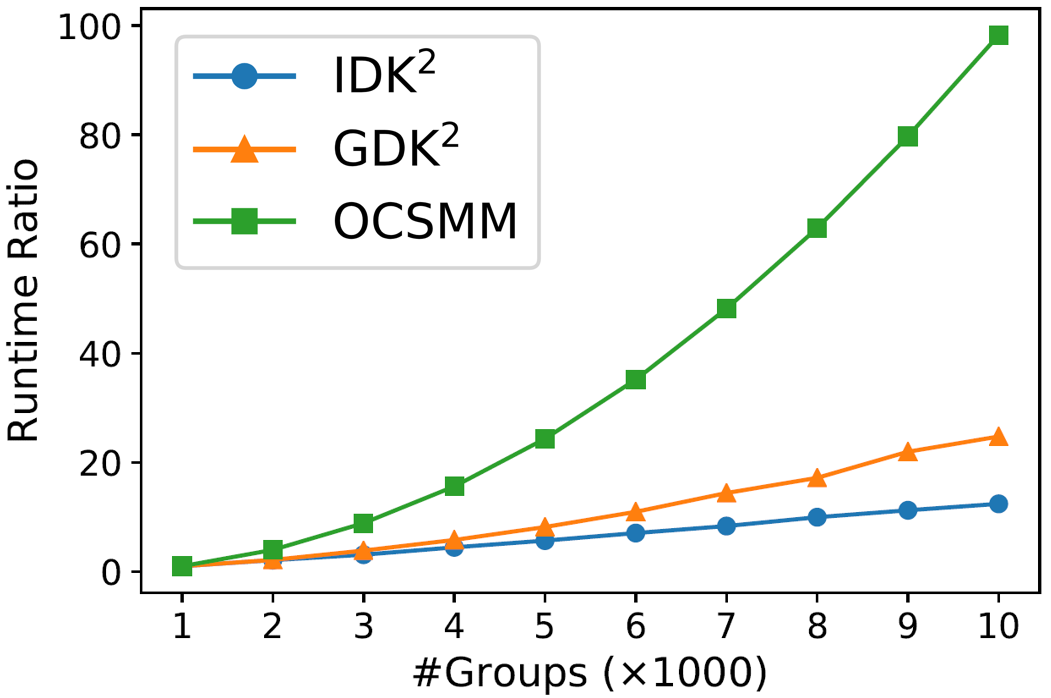}
\label{fig:timecost}}
%\subfloat[IDK$^2$: CPU vs GPU]{ \includecombinedgraphics[scale=0.32]{} \label{fig:cpu-vs-gpu}}
\subfloat[IDK$^2$: CPU vs GPU]{ \includegraphics[width=.23\textwidth]{./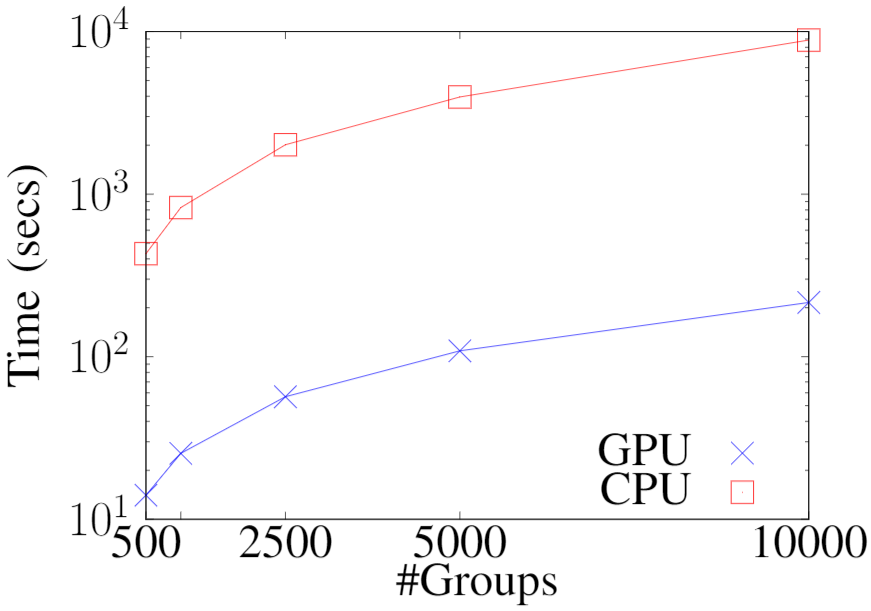} \label{fig:cpu-vs-gpu}}
\caption{Runtime comparison}

\label{fig:timecost_comparison}
\end{figure}

\subsection{Scaleup test}
\label{sec_scaleup_test}
We perform a scaleup test for IDK$^2$, GDK$^2$ and OCSMM by increasing the number of groups from 1,000 to 10,000 on an artificial two-dimensional dataset. 
Figure~\ref{fig:timecost} shows that OCSMM has the largest runtime increase; and IDK$^2$ has the lowest. When the number of groups increases 10 times from 1,000 to 10,000, OCSMM has its runtime increased by a factor of 98; while IDK$^2$ and GDK$^2$ have their runtimes increased by a factor of 12 and 25, respectively---linear to the number of groups.
IDK$^2$ using the IK implementation (stated in Section \ref{sec_IsolationKernel}) is amenable to GPU acceleration.
Figure \ref{fig:cpu-vs-gpu} shows the runtime comparison between the CPU and GPU implementations of IDK$^2$.  
%For 500 groups: CPU is 431 and GPU is 14. 
At 10,000 groups, the CPU version took 8,864 seconds; whereas the GPU version took 216 seconds.

\section{Conclusions}
We show that the proposed Isolation Distributional Kernel (IDK) addresses two key issues of  kernel mean embedding, where the point kernel employed has: (i) a feature map with intractable dimensionality which leads to high computational cost; and (ii) data independency which leads to poor detection accuracy in anomaly detection.

We introduce a new Isolation Kernel and establish the geometrical interpretation of its feature map. We provide proofs of its data dependent property and a characteristic kernel---an essential criterion when a point kernel is used in the kernel mean embedding framework. 

The geometrical interpretation provides the insight that point anomalies and normal points are mapped into distinct regions in the feature space. This is the source of the power of IDK.
We also reveal that the distributional characterisation makes IDK robust to contamination of point anomalies.

Our evaluation shows that the proposed anomaly detectors IDK and IDK$^2$, without explicit learning,  produce better detection accuracy than existing key kernel-based methods in detecting both point and group anomalies. They also run orders of magnitude faster in large datasets.

In group anomaly detection, the superior runtime performance of IDK$^2$ is due to two main factors. First, the use of a distributional kernel to compute similarity between (effectively) a group and a dataset of groups.
This has avoided the need to compute all pairs of groups which is the root cause of the high computational cost of OCSMM. 
Second,  our proposed implementation of Isolation Kernel used in IDK$^2$ also enables GPU acceleration. We show that the GPU version runs more than one order of magnitude faster than the CPU version, where the latter is already orders of magnitude faster than OCSMM.

This work reveals  for the first time that the problem of detecting group anomalies in input space can be effectively reformulated as the problem of point anomaly detection in Hilbert space, without explicit learning.

% if have a single appendix:
%\appendix[Proof of the Zonklar Equations]
% or
%\appendix  % for no appendix heading
% do not use \section anymore after \appendix, only \section*
% is possibly needed

% use appendices with more than one appendix
% then use \section to start each appendix
% you must declare a \section before using any
% \subsection or using \label (\appendices by itself
% starts a section numbered zero.)
%

% \appendices
% \section{Proof of the First Zonklar Equation}
% Appendix one text goes here.

% % you can choose not to have a title for an appendix
% % if you want by leaving the argument blank
% \section{}
% Appendix two text goes here.

% use section* for acknowledgment
\ifCLASSOPTIONcompsoc
  % The Computer Society usually uses the plural form
  \section*{Acknowledgments}
\else
  % regular IEEE prefers the singular form
  \section*{Acknowledgment}
\fi

This paper is an extended version of our earlier work on using IDK for point anomaly detection only, reported in KDD-2020~\cite{ting2020isolation}.

% Can use something like this to put references on a page
% by themselves when using endfloat and the captionsoff option.
\ifCLASSOPTIONcaptionsoff
  \newpage
\fi

% trigger a \newpage just before the given reference
% number - used to balance the columns on the last page
% adjust value as needed - may need to be readjusted if
% the document is modified later
%\IEEEtriggeratref{8}
% The "triggered" command can be changed if desired:
%\IEEEtriggercmd{\enlargethispage{-5in}}

% references section

% can use a bibliography generated by BibTeX as a .bbl file
% BibTeX documentation can be easily obtained at:
% http://mirror.ctan.org/biblio/bibtex/contrib/doc/
% The IEEEtran BibTeX style support page is at:
% http://www.michaelshell.org/tex/ieeetran/bibtex/
%\bibliographystyle{IEEEtran}
% argument is your BibTeX string definitions and bibliography database(s)
%\bibliography{IEEEabrv,../bib/paper}
%
% <OR> manually copy in the resultant .bbl file
% set second argument of \begin to the number of references
% (used to reserve space for the reference number labels box)
\bibliographystyle{IEEEtran}
\bibliography{references}

% Generated by IEEEtran.bst, version: 1.12 (2007/01/11)
\begin{thebibliography}{10}
\providecommand{\url}[1]{#1}
\csname url@samestyle\endcsname
\providecommand{\newblock}{\relax}
\providecommand{\bibinfo}[2]{#2}
\providecommand{\BIBentrySTDinterwordspacing}{\spaceskip=0pt\relax}
\providecommand{\BIBentryALTinterwordstretchfactor}{4}
\providecommand{\BIBentryALTinterwordspacing}{\spaceskip=\fontdimen2\font plus
\BIBentryALTinterwordstretchfactor\fontdimen3\font minus
  \fontdimen4\font\relax}
\providecommand{\BIBforeignlanguage}[2]{{%
\expandafter\ifx\csname l@#1\endcsname\relax
\typeout{** WARNING: IEEEtran.bst: No hyphenation pattern has been}%
\typeout{** loaded for the language `#1'. Using the pattern for}%
\typeout{** the default language instead.}%
\else
\language=\csname l@#1\endcsname
\fi
#2}}
\providecommand{\BIBdecl}{\relax}
\BIBdecl

\bibitem{EMK_Bo-NIPS2009}
L.~Bo and C.~Sminchisescu, ``Efficient match kernels between sets of features
  for visual recognition,'' in \emph{Proceedings of International Conf. on
  Neural Information Processing Systems}, 2009, pp. 135--143.

\bibitem{SupportMeasureMchines-NIPS2012}
K.~Muandet, K.~Fukumizu, F.~Dinuzzo, and B.~Scholkopf, ``Learning from
  distributions via support measure machines,'' in \emph{Advances in Neural
  Information Processing Systems}, 2012, pp. 10--18.

\bibitem{Sutherland-Thesis2016}
D.~J. Sutherland, \emph{Scalable, Flexible and Active Learning on
  Distributions}.\hskip 1em plus 0.5em minus 0.4em\relax PhD Thesis, School of
  Computer Science, Carnegie Mellon University, 2016.

\bibitem{HilbertSpaceEmbedding2007}
A.~Smola, A.~Gretton, L.~Song, and B.~Sch{\"o}lkopf, ``A hilbert space
  embedding for distributions,'' in \emph{Algorithmic Learning Theory},
  M.~Hutter, R.~A. Servedio, and E.~Takimoto, Eds.\hskip 1em plus 0.5em minus
  0.4em\relax Springer, 2007, pp. 13--31.

\bibitem{KernelMeanEmbedding2017}
K.~Muandet, K.~Fukumizu, B.~Sriperumbudur, and B.~Schölkopf, ``Kernel mean
  embedding of distributions: A review and beyond,'' \emph{Foundations and
  Trends in Machine Learning}, vol. 10 (1–2), pp. 1--141, 2017.

\bibitem{ting2018IsolationKernel}
K.~M. Ting, Y.~Zhu, and Z.-H. Zhou, ``Isolation kernel and its effect on
  {SVM},'' in \emph{Proceedings of the ACM SIGKDD International Conference on
  Knowledge Discovery and Data Mining}, 2018, pp. 2329--2337.

\bibitem{IsolationKernel-AAAI2019}
X.~Qin, K.~M. Ting, Y.~Zhu, and V.~C.~S. Lee, ``Nearest-neighbour-induced
  isolation similarity and its impact on density-based clustering,'' in
  \emph{Proceedings of The Thirty-Third AAAI Conference on Artificial
  Intelligence}, 2019, pp. 4755--4762.

\bibitem{Nystrom_NIPS2000}
C.~K.~I. Williams and M.~Seeger, ``Using the {N}ystr\"{o}m method to speed up
  kernel machines,'' in \emph{Advances in Neural Information Processing Systems
  13}, 2001, pp. 682--688.

\bibitem{RandomFeatures2007}
A.~Rahimi and B.~Recht, ``Random features for large-scale kernel machines,'' in
  \emph{Proceedings of the 20th International Conference on Neural Information
  Processing Systems}, 2007, pp. 1177--1184.

\bibitem{Nystrom-NIPS12}
T.~Yang, Y.-F. Li, M.~Mahdavi, R.~Jin, and Z.-H. Zhou, ``Nystr\"{o}m method vs
  random fourier features: A theoretical and empirical comparison,'' in
  \emph{Proceedings of the 25th International Conference on Neural Information
  Processing Systems - Volume 1}, ser. NIPS'12.\hskip 1em plus 0.5em minus
  0.4em\relax USA: Curran Associates Inc., 2012, pp. 476--484.

\bibitem{DistMetricLearning-Xing:2002}
E.~P. Xing, A.~Y. Ng, M.~I. Jordan, and S.~Russell, ``Distance metric learning,
  with application to clustering with side-information,'' in \emph{Proceedings
  of the 15th International Conference on Neural Information Processing
  Systems}, 2002, pp. 521--528.

\bibitem{weinberger2009distance}
K.~Q. Weinberger and L.~K. Saul, ``Distance metric learning for large margin
  nearest neighbor classification,'' \emph{Journal of Machine Learning
  Research}, vol.~10, no.~2, pp. 207--244, 2009.

\bibitem{zadeh2016geometric}
P.~Zadeh, R.~Hosseini, and S.~Sra, ``Geometric mean metric learning,'' in
  \emph{International Conference on Machine Learning}, 2016, pp. 2464--2471.

\bibitem{iNNE}
T.~R. Bandaragoda, K.~M. Ting, D.~Albrecht, F.~T. Liu, Y.~Zhu, and J.~R. Wells,
  ``Isolation-based anomaly detection using nearest neighbour ensembles,''
  \emph{Computational Intelligence}, vol.~34, no.~4, pp. 968--998, 2018.

\bibitem{Fukunaga-Book1990}
F.~Keinosuke, \emph{Introduction to Statistical Pattern Recognition}.\hskip 1em
  plus 0.5em minus 0.4em\relax Academic Press, 1990, ch.~6, pp. 268--270.

\bibitem{SoftMatter2014}
V.~Baranau and U.~Tallarek, ``Random-close packing limits for monodisperse and
  polydisperse hard spheres,'' \emph{Soft Matter}, vol.~10, pp. 3826--3841,
  2014.

\bibitem{Nature2008}
C.~Song, P.~Wang, and H.~A. Makse, ``A phase diagram for jammed matter,''
  \emph{Nature}, vol. 453, no. 7195, pp. 629--632, 2008.

\bibitem{Sriperumbudur:2010}
B.~K. Sriperumbudur, A.~Gretton, K.~Fukumizu, B.~Sch\"{o}lkopf, and G.~R.
  Lanckriet, ``Hilbert space embeddings and metrics on probability measures,''
  \emph{Journal of Machine Learning Research}, vol.~11, pp. 1517--1561, 2010.

\bibitem{musco2017recursive}
C.~Musco and C.~Musco, ``Recursive sampling for the nystrom method,'' in
  \emph{Adv in Neural Information Processing Systems}, 2017, pp. 3833--3845.

\bibitem{OCSVM2001}
B.~Sch\"{o}lkopf, J.~C. Platt, J.~C. Shawe-Taylor, A.~J. Smola, and R.~C.
  Williamson, ``Estimating the support of a high-dimensional distribution,''
  \emph{Neural Computing}, vol.~13, no.~7, pp. 1443--1471, 2001.

\bibitem{KDEOS}
E.~Schubert, A.~Zimek, and H.-P. Kriegel, ``Generalized outlier detection with
  flexible kernel density estimates,'' in \emph{Proceedings of the SIAM
  Conference on Data Mining}, 2014, pp. 542--550.

\bibitem{Local_kernel_density}
W.~{Hu}, J.~{Gao}, B.~{Li}, O.~{Wu}, J.~{Du}, and S.~{Maybank}, ``Anomaly
  detection using local kernel density estimation and context-based
  regression,'' \emph{IEEE Transactions on Knowledge and Data Engineering},
  vol.~32, no.~2, pp. 218--233, 2020.

\bibitem{FastLOF-TKDE2016}
M.~{Salehi}, C.~{Leckie}, J.~C. {Bezdek}, T.~{Vaithianathan}, and X.~{Zhang},
  ``Fast memory efficient local outlier detection in data streams,'' \emph{IEEE
  Transactions on Knowledge and Data Engineering}, vol.~28, no.~12, pp.
  3246--3260, 2016.

\bibitem{NemenyiTest-2006}
J.~Dem\v{s}ar, ``Statistical comparisons of classifiers over multiple data
  sets,'' \emph{Journal of Machine Learning Research}, pp. 1--30, 2006.

\bibitem{liu2008isolation}
F.~T. Liu, K.~M. Ting, and Z.-H. Zhou, ``Isolation forest,'' in
  \emph{Proceedings of IEEE International Conference on Data Mining}, 2008, pp.
  413--422.

\bibitem{EmmottDDFW16}
A.~Emmott, S.~Das, T.~G. Dietterich, A.~Fern, and W.~Wong, ``A meta-analysis of
  the anomaly detection problem,'' \emph{CoRR}, 2016.

\bibitem{Aggarwal-Book2017}
C.~C. Aggarwal and S.~Sathe, \emph{Outlier Ensembles: An Introduction}.\hskip
  1em plus 0.5em minus 0.4em\relax Springer International Publishing, 2017.

\bibitem{Xiong:2011}
L.~Xiong, B.~P\'{o}czos, and J.~Schneider, ``Group anomaly detection using
  flexible genre models,'' in \emph{Proceedings of the International Conference
  on Neural Information Processing Systems}, 2011, pp. 1071--1079.

\bibitem{OCSMM2013}
K.~Muandet and B.~Sch\"{o}lkopf, ``One-class support measure machines for group
  anomaly detection,'' in \emph{Proceedings of the Twenty-Ninth Conference on
  Uncertainty in Artificial Intelligence}, 2013, pp. 449--458.

\bibitem{ting2020isolation}
K.~M. Ting, B.-C. Xu, T.~Washio, and Z.-H. Zhou, ``Isolation distributional
  kernel: A new tool for kernel based anomaly detection,'' in \emph{Proceedings
  of the 26th ACM SIGKDD International Conference on Knowledge Discovery \&
  Data Mining}, 2020, pp. 198--206.

\end{thebibliography}

\end{document}